\documentclass{article}

\usepackage{PRIMEarxiv}
\usepackage{natbib}

\makeatletter
\renewcommand*{\@xthm}[2]{%
  \def\@jmlr@currentthm{#1}%
  \@ifundefined{hyper@makecurrent}{}{%
    \hyper@makecurrent{#1}%
    \Hy@raisedlink{\hyper@anchorstart{\@currentHref}\hyper@anchorend}%
  }%
  \@begintheorem{#2}{\csname the#1\endcsname}%
  \ignorespaces
}

\def\@ythm#1#2[#3]{%
  \def\@jmlr@currentthm{#1}%
  \@ifundefined{hyper@makecurrent}{}{%
    \hyper@makecurrent{#1}%
    \Hy@raisedlink{\hyper@anchorstart{\@currentHref}\hyper@anchorend}%
  }%
  \@opargbegintheorem{#2}{\csname the#1\endcsname}{#3}%
  \ignorespaces
}

\makeatother

\usepackage[utf8]{inputenc} 
\usepackage[T1]{fontenc}    
\usepackage{booktabs}       
\usepackage{amsfonts}       
\usepackage{nicefrac}       
\usepackage{microtype}      
\usepackage{xcolor}         

\usepackage{mathtools}
\usepackage{bm,bbm}
\usepackage{float}
\usepackage{balance}
\usepackage{multirow} 
\usepackage{colortbl}
\usepackage{lmodern,tikz}
\definecolor{ink}{HTML}{233348}
\definecolor{grid}{HTML}{CDD3DB}
\definecolor{bumpone}{HTML}{2474AC}
\definecolor{bumptwo}{HTML}{B76220}
\definecolor{bumpthree}{HTML}{328369}
\definecolor{bumpfour}{HTML}{875AAF}
\usetikzlibrary{arrows}
\usepackage{caption}
\usepackage{hyperref}
\usepackage{cleveref}

\usepackage[size=footnotesize]{todonotes}

\makeatletter
\newcommand\Autoref[1]{\@first@ref#1,@}
\def\@throw@dot#1.#2@{#1}
\def\@set@refname#1{
    \edef\@tmp{\getrefbykeydefault{#1}{anchor}{}}%
    \xdef\@tmp{\expandafter\@throw@dot\@tmp.@}%
    \ltx@IfUndefined{\@tmp autorefnameplural}%
         {\def\@refname{\@nameuse{\@tmp autorefname}s}}%
         {\def\@refname{\@nameuse{\@tmp autorefnameplural}}}%
}
\def\@first@ref#1,#2{%
  \ifx#2@\autoref{#1}\let\@nextref\@gobble
  \else%
    \@set@refname{#1}
    \@refname~\ref{#1}
    \let\@nextref\@next@ref
  \fi%
  \@nextref#2%
}
\def\@next@ref#1,#2{%
   \ifx#2@ and~\ref{#1}\let\@nextref\@gobble
   \else, \ref{#1}
   \fi%
   \@nextref#2%
}

\makeatother



\newcommand{\real}{\mathbb{R}}  
\def\P{\mathbb{P}} 
\def\T{{ \mathrm{\scriptscriptstyle T} }} 
\def\ind{\mathbbm{1}} 
\bmdefine\e{e}
\bmdefine\u{u}
\bmdefine\ba{a}
\bmdefine\bbeta{\beta}

\newcommand{\defeq}{:=}

\renewcommand{\hat}{\widehat}  
\renewcommand{\tilde}{\widetilde}

\makeatletter
\DeclarePairedDelimiterX{\norm}[1]{\lVert}{\rVert}{#1} 
\DeclarePairedDelimiterX{\bnorm}[1]{\biggl\lVert}{\biggr\rVert}{#1} 
\DeclarePairedDelimiterX{\abs}[1]{\lvert}{\rvert}{#1} 
\DeclarePairedDelimiterX{\ip}[2]{\langle}{\rangle}{#1,#2} 

\newcommand{\RNum}[1]{\uppercase\expandafter{\romannumeral #1\relax}} 

\newcommand*{\indep}{%
  \mathbin{%
    \mathpalette{\@indep}{}%
  }%
}
\newcommand*{\nindep}{%
  \mathbin{
    \mathpalette{\@indep}{/}%
  }%
}
\newcommand*{\@indep}[2]{%
  \sbox0{$#1\perp\m@th$}
  \sbox2{$#1=$}
  \sbox4{$#1\vcenter{}$}
  \rlap{\copy0}
  \dimen@=\dimexpr\ht2-\ht4-.2pt\relax
  \kern\dimen@
  \ifx\\#2\\%
  \else
    \hbox to \wd2{\hss$#1#2\m@th$\hss}%
    \kern-\wd2 %
  \fi
  \kern\dimen@
  \copy0 
}

\makeatother

\newtheorem{theorem}{Theorem}
\newtheorem{proposition}{Proposition}
\newtheorem{lemma}{Lemma}

\newtheorem{conjecture}{Conjecture}
\newtheorem{example}{Example}
\newtheorem{definition}{Definition}

\newtheorem{remark}{Remark}
\newtheorem{proof}{Proof}

\newenvironment{proofof}[1]{\begin{proof}\textbf{of {#1}}}{\end{proof}}

\newenvironment{restatement}[2][]
{%
\begin{trivlist}
\item[\hskip\labelsep\bfseries
\autoref{#2}%
\ifx\\#1\\%
.%
\else\ (#1).%
\fi]
}
{\end{trivlist}}

\allowdisplaybreaks

\title{On the Sample Complexity of Active Learning with Membership Queries}

\author{
  Ganghua Wang\thanks{Part of this work was done while the author was a postdoctoral researcher in the Data Science Institute at the University of Chicago.}\\
    Department of Mathematics \\ University of Arizona \\
   \texttt{ganghua@arizona.edu} 
   \And
      Shaddin Dughmi\thanks{Supported by NSF Grant CCF-2432219. Part of this work was done while the author was on sabbatical as the Carter and Tania Neild visiting professor at Northwestern University, as well as a visiting professor in the Data Science Institute at the University of Chicago. } \\
    Department of Computer Science \\ University of Southern California \\
   \texttt{shaddin@usc.edu}
}

\begin{document}

\maketitle

\begin{abstract} 
    This work revisits a fundamental question in active learning: how powerful is the ability to synthesize arbitrary queries? Compared to pool-based active learning, where the learner only selects queries from a given unlabeled pool, we find that this seemingly mild change in query ability may dramatically alter the difficulty of statistical learning. In particular, some hypothesis classes that are inherently slow to learn in the pool-based setting, achieving only polynomial error decay in the number of samples, become exponentially learnable once synthesized queries are allowed. This striking gap suggests that membership query synthesis induces a fundamentally different mode of learning, one that is not adequately captured by existing active learning theory and calls for new analytical tools to characterize its complexity. Motivated by this phenomenon, we develop several sufficient conditions, present intriguing examples, and propose a conjectural perspective toward understanding which hypothesis classes admit efficient learning through synthesized queries. 
\end{abstract}

\keywords{Active learning, sample complexity, membership query synthesis, realizable learning.}

\section{Introduction}\label{sec:intro}
    One of the central goals of machine learning is to reduce the amount of data required to learn accurately. Active learning offers a striking possibility toward this goal: instead of passively receiving training data, the learner can \emph{interactively choose} which data points to query~\citep{angluin1988queries, lewis1995sequential, cohn1996active, settles2009active}. This ability to ask informative questions can dramatically accelerate learning. A classical example is learning a threshold function $f_{\theta}(x) = \ind_{x>\theta}$ on $[0, 1]$. A passive learner requires $O(1/\epsilon)$ labeled samples to estimate $\theta$ up to accuracy $\epsilon$, since informative samples near the threshold appear only rarely. In contrast, an active learner can closely approximate $\theta$ using only $O(\log(1/\epsilon))$ queries through binary search. This exponential improvement illustrates the remarkable potential of active learning.
    A natural and fundamental question is therefore: 
    \begin{center}
        \emph{When can active learning achieve such exponential improvement in sample complexity? }
    \end{center}
    Following the prominent PAC-learning framework~\citep{valiant1984theory}, \emph{sample complexity} quantifies the number of labeled examples required to learn the target hypothesis to a desired accuracy level.
    In passive learning, the sample complexity is by now relatively well understood, characterized by combinatorial quantities such as the VC dimension for many learning problems~\citep{vapnik1971uniform, blumer1989learnability}.
    
    For active learning, however, we find that the answer depends crucially on \emph{how much freedom} the learner has in selecting its queries. For example, in \emph{pool-based} setting~\citep{lewis1995sequential}, the learner may only choose which labels to request from a given unlabeled dataset; while under \emph{membership query synthesis}~\citep{angluin1988queries}, or simply membership queries, the learner may query \emph{arbitrary} points from the input space. Other models provide different forms of interaction, such as comparison queries~\citep{kane2017active} and equivalence queries~\citep{angluin1988queries}.
    While this distinction may initially appear mild, our work shows that it can fundamentally alter the difficulty of learning.

    The sample complexity of pool-based active learning has been extensively studied in the literature~\citep{cohn1994improving, dasgupta2004analysis, dasgupta2005analysis, hanneke2015minimax}. Among the most influential results, \citet{hanneke2015minimax} established minimax-optimal rates under various noise conditions and identified the \emph{star number}, or equivalently, the \emph{teaching dimension}, as a fundamental complexity measure. In the realizable (noiseless) setting, the star number completely characterizes when exponential error decay is achievable: a hypothesis class admits exponentially fast learning if and only if its star number is finite.

    Surprisingly, we show that this picture can break down once membership queries are allowed. In fact, certain hypothesis classes that are provably hard to learn under pool-based active learning become exponentially learnable with membership queries. 
    We demonstrate one such example below.

    \newcommand{\lin}{\textit{lin}}
    \begin{example}[Linear Classifier (Halfspaces)]\label{ex:lin}
        Let $\mathcal{H}_{\lin}^d$ denote the class of binary linear classifiers supported on $[0, 1]^d$, where $d$ is the dimension. Namely, $\mathcal{H}_{\lin}^d \defeq \{f: f(x) = \ind_{ a^\T x + b\geq 0}, a\in \real^d, b\in \real\}$. For $d=2$, the class $\mathcal{H}_{\lin}^2$ cannot achieve exponential learning rates in the pool-based setting. In contrast, with membership queries, $\mathcal{H}_{\lin}^2$ admits exponentially fast learning rates.
    \end{example}

    To better understand the above result, we also introduce the following example. 
    
    \newcommand{\intv}{\textit{int}}
    \begin{example}[Intervals]\label{ex:int}
        Let $\mathcal{H}_{\intv} = \{\ind_{x\in[a,b]}: 0\leq a\leq b\leq 1\}$ be the class of interval indicators on $[0, 1]$. Then, $\mathcal{H}_{\intv}$ cannot achieve exponential learning rates in either the pool-based or membership-query setting. 
    \end{example}

\begin{figure}[t]
\centering

\begin{minipage}[c]{0.48\textwidth}
    \centering

    \includegraphics[width=\linewidth]{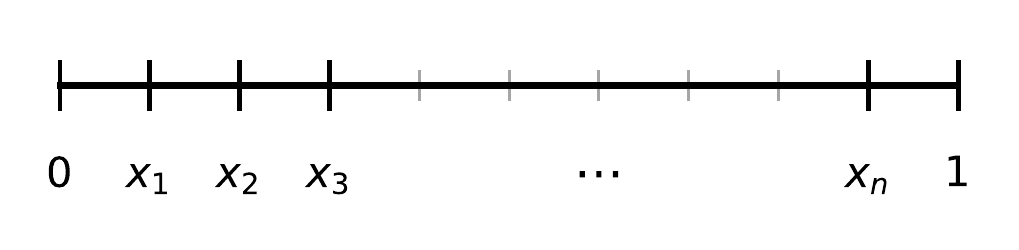}
    {\small (a)}
    \vspace{0.5em}

    \includegraphics[width=\linewidth]{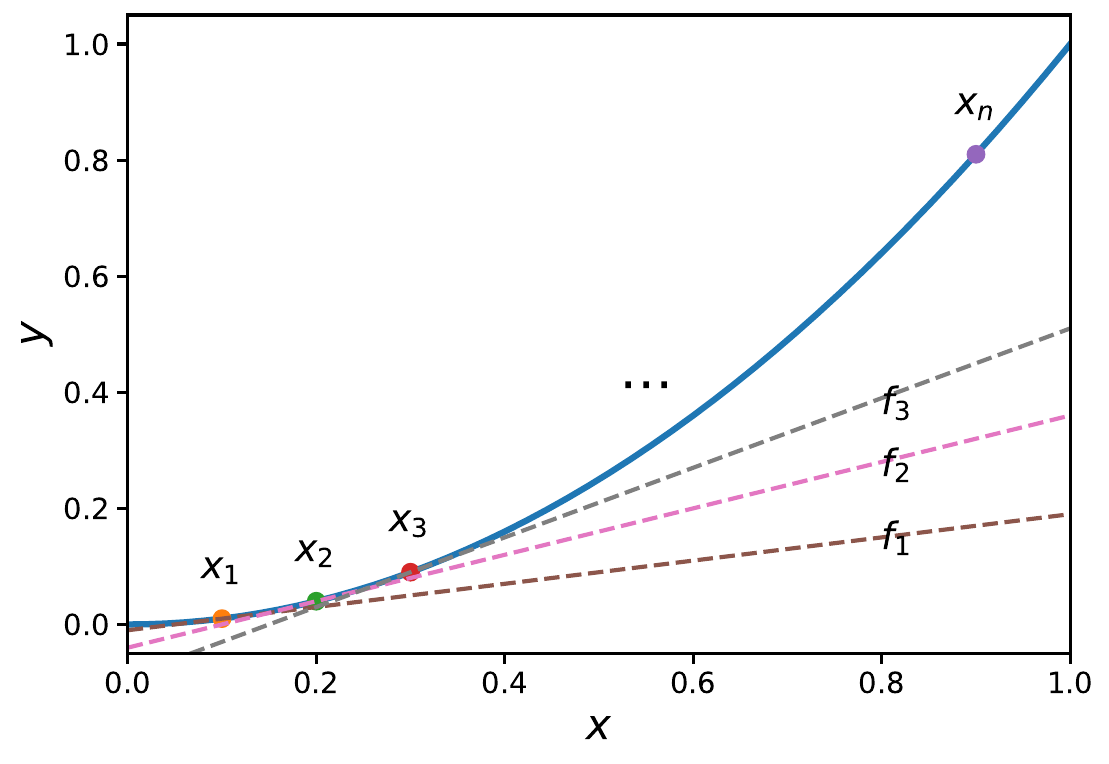}
    {\small (b)}
    
\end{minipage}
\hfill
\begin{minipage}[c]{0.48\textwidth}
    \centering
    \vspace{0.5em}
    \includegraphics[width=\linewidth]{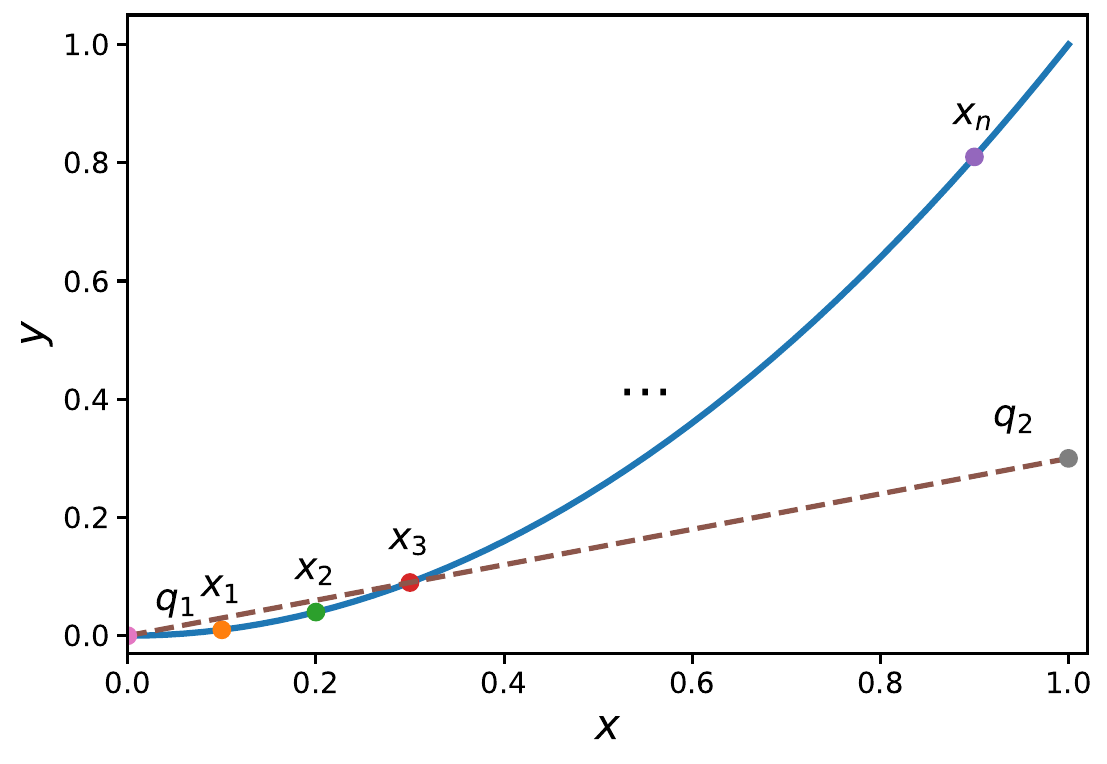}
    {\small (c)}
\end{minipage}

\caption{Embedding intervals into halfspaces. (a) Learning singleton intervals on a discrete set requires identifying the unique positive point, which takes $\Omega(n)$ queries in the worst case. (b) By embedding the points onto a curve, every singleton interval induced by $x_i$ is realizable by a linear classifier $f_i$. Consequently, the same $\Omega(n)$ lower bound applies to pool-based learning of 2D halfspaces. (c) Membership query allows the learner to query arbitrary points outside the given set, such as $q_1$ and $q_2$. It exploits the global geometry and enables a binary-search-like strategy, leading to exponentially faster learning.} 
\label{fig:main_ex}
\end{figure}

    We illustrate the intuition in \Autoref{fig:main_ex}. First, \Autoref{fig:main_ex}(a) demonstrates that the class of intervals is intrinsically difficult to learn, because informative points can be arbitrarily localized. To see this, consider the unlabeled set $S=\{x_i = i/n, i=1,\dots,n\}$. For each $i$, define $f_i(x) = \ind_{x= i/n}$. These functions induce $n$ distinct labelings on $S$, differing only at a single point. Consequently, any learner must identify which location carries the positive label, requiring $\Omega(n)$ queries in the worst case. Equivalently, achieving error of order $1/n$ requires at least $\Omega(n)$ query complexity. 

    The key observation is that this hard interval structure can be embedded into the class of 2D linear classifiers in the pool-based setting, as illustrated in \Autoref{fig:main_ex}(b). By mapping the points onto a convex curve, each singleton labeling can be realized by a linear classifier.
    As a result, when the learner is restricted to querying labels only from the pool $S$, the geometric structure of linear classifiers provides no additional advantage, and the same lower bound carries over. 

    However, membership queries change the situation entirely. As shown in \Autoref{fig:main_ex}(c), by allowing the learner to query arbitrary points, the learner can effectively exploit the linear structure and efficiently localize the decision boundary, leading to exponentially fast convergence rates. 

    This example highlights a fundamental distinction between pool-based active learning and membership query synthesis. The difficulty of active learning depends not only on the hypothesis class itself, but also on the learner's query ability. Similar phenomena have been observed in other enriched query models, such as comparison queries~\citep{kane2017active}. The separation shown here further implies that existing complexity measures, such as the star number, are insufficient to characterize exponential learnability under membership query synthesis. Motivated by this gap, we seek new perspectives and complexity measures for understanding the power of membership queries. In particular, we focus on binary classification in the realizable setting, which already exhibits rich and surprising behaviors, and leave the study of more general noise models and responses to future work.

    Our main contributions are summarized as follows. 
    \begin{enumerate}
        \item We show that a natural extension of the teaching dimension from the pool-based setting to membership query synthesis does not provide a complete characterization of exponential learnability. 
        \item We then investigate the halving property, a key ingredient behind exponential learning in the pool-based setting, and propose the synthesized halving dimension as a candidate complexity measure for membership query synthesis. For infinite concept classes, we conjecture that a constant synthesized halving dimension is both necessary and sufficient for exponential learning. 
        \item We provide a novel proof that 2D halfspaces can be learned at an exponential rate under membership queries, complementing existing approaches based on a dual formulation. Beyond serving as a concrete separation between pool-based and membership-query active learning, this result may be of independent interest. 
    \end{enumerate}

    Historically, membership query synthesis has often been viewed as less practical and effective because synthesized queries may correspond to unnatural or difficult-to-interpret inputs~\citep{angluin1988queries, baum1992query}. However, our results suggest that, despite this limitation, the model possesses richer learning capabilities than previously recognized. More broadly, understanding what can be learned when arbitrary queries are permitted helps delineate the fundamental limits of interactive learning and provides insight into how querying power shapes learning difficulty. 

    The rest of paper is organized as follows. We review related work in \Autoref{sec:lit} and formalize the problem setup in \Autoref{sec:prelim}. In \Autoref{sec:td}, we examine a natural generalization of the teaching dimension to membership queries and show its limitations. In \Autoref{sec:half}, we study the halving property and discuss its potential role as a complexity measure. The exponential learnability of 2D halfspaces is established in \Autoref{sec:lin_syn}. Finally, \Autoref{sec:con} concludes the paper with further discussions.

\section{Related Work}\label{sec:lit} 

There is a rich literature on the sample complexity of active learning. As discussed in the introduction, the complexity depends critically on the query model available to the learner. In this section, we briefly review two major settings: pool-based active learning and membership query synthesis, with a particular focus on the realizable (noiseless) setting.

For pool-based active learning, we refer the interested reader to \citet{hanneke2015minimax}, which provides a comprehensive treatment of the subject. In particular, it establishes a complete characterization of sample complexity in the realizable setting and derives some of the best-known bounds under a broad range of noise conditions. The work also relates several influential complexity measures developed in earlier literature, including splitting index~\citep{dasgupta2005analysis}, teaching dimension~\citep{goldman1995complexity, hanneke2007teaching}, and version space compression~\citep{el2010foundations}. 

In contrast, the theory of membership query synthesis is significantly less developed, especially for infinite hypothesis classes. Much of the existing literature focuses either on finite concept classes or on learning under specific distributional assumptions. For finite classes, 
the extended teaching dimension~\citep{hegedHus1995generalized} quantifies the minimum number of queries required to uniquely identify a target hypothesis; \citet{nowak2008generalized} proposed a condition under which generalized binary search achieves logarithmic query complexity; and \citet{angluin2004queries} provides a broader review of complexity measures for both membership and equivalence queries. However, these results do not directly address the learnability of infinite hypothesis classes. Indeed, every finite concept class can be identified using finitely many queries, making asymptotic convergence rates largely irrelevant in that setting. 

Even for basic yet fundamental infinite classes such as halfspaces, our understanding remains incomplete. Exponential convergence rates for halfspaces have been established primarily under additional structural assumptions, such as uniform or Gaussian data distributions~\citep{balcan2007margin, diakonikolas2024active}, or homogeneous halfspaces~\citep{chen2017near}. 
More recently, \citet{hopkins2020point} proved a nearly minimax-optimal logarithmic query complexity for arbitrary finite unlabeled sets through the lens of a dual problem named point location, a classical topic in computational geometry~\citet{meyer1984polynomial, meiser1993point, kane2017active}.
Instead, we establish exponential learnability for 2D classifiers through a direct analysis in the original hypothesis space, providing a complementary primal perspective.

\newcommand{\passive}{\textrm{passive}}
\newcommand{\pool}{\textrm{pool}}
\newcommand{\syn}{\textrm{syn}}

\section{Background and Preliminaries}\label{sec:prelim}

\subsection{Setup}
Throughout the paper, we consider binary classification in the realizable setting. 
Let $\mathcal X$ denote the input space and let $\mathcal{C} \subseteq \{0,1\}^{\mathcal{X}}$ be a concept (hypothesis) class with finite VC dimension~\citep[see, e.g.,][]{vapnik2013nature}.
We evaluate learning performance with respect to an arbitrary and unknown probability measure $\mu$ on $\mathcal X$. Unlike much of the existing literature on membership query synthesis, which often assumes a specific data distribution, our goal is to understand learnability uniformly over all distributions. We assume the learner has access to a sampling oracle for $\mu$, namely, an oracle that returns independent samples from $\mu$.

The learner aims to identify an unknown target hypothesis $f\in\mathcal{C}$ from a set of observations $Z_n=\{(X_i, Y_i), i=1,\dots,n\}$
, where each $X_i$ is a query point and $Y_i=f(X_i)$ is its label. One key distinction between different learning models lies in how the query points $X_i$ are generated. We illustrate it via the following three settings, ordered from the least to the most powerful learner: 
\begin{itemize}
    \item \textbf{Passive learning.} $X_i$'s are IID drawn from $\mu$. 

    \item \textbf{Pool-based active learning.} 
    The learner is given an unlabeled pool $S=\{\tilde{X}_1,\dots,\tilde{X}_m\}$ with $\tilde X_i$'s sampled independently from $\mu$. The learner may adaptively choose up to $n$ points in the pool to query.
    
    \item \textbf{Membership query synthesis.} 
    The learner may directly synthesize arbitrary query points in $\mathcal{X}$. Queries are chosen sequentially and adaptively, meaning that each $X_i$ may depend on the entire history $\{X_j,Y_j,j=1,\dots,i-1\}$. 
\end{itemize}

Given the observations $Z_n$, the learner outputs an estimator $\hat f_n \in \mathcal{C}$\footnote{This is known as proper learning. Nevertheless, in Appendix~\ref{sec:app_proper}, we show that allowing improper learning does not bring additional benefit in this setting.}. 
The prediction error of $\hat{f}_n$ is measured by zero-one loss on $\mu$ as $\norm{\hat{f}_n - f} =  \int_{x} \abs{\hat{f}_n(x) - f(x)} \mu(dx)$. Namely, the probability mass on which the estimator and the ground truth disagree.

We use $A$ to denote the entire learning strategy, including both query selection and estimator construction, and let $\mathcal{A}$ denote the class of all possible learning strategies under a given learning model. In particular, we write $\mathcal{A}_{\passive}$, $\mathcal{A}_{\pool}$, and $\mathcal{A}_{\syn}$ for the classes of passive-learning, pool-based active-learning, and membership-query-synthesis strategies, respectively.

    A natural measure of learning difficulty is the minimum number of labeled queries required to learn an unknown target hypothesis. This leads to the standard notion of sample complexity.
    
    \begin{definition}[Sample Complexity]
        For a hypothesis class $\mathcal C$ and a learning model $\mathcal A$, the sample complexity is defined as
        \begin{align*}
            n(\epsilon,\delta;\mathcal{C},\mathcal{A}) \defeq \min \biggl\{n: n\geq 0, \inf_{A \in \mathcal{A}} \sup_{\mu} \sup_{f \in \mathcal{C}} \P(\norm{\hat{f}_n - f} \geq \epsilon) \leq \delta \biggr\}.
        \end{align*}
        If the above set is empty, we define $n(\epsilon,\delta;\mathcal{C},\mathcal{A})=\infty$.
    \end{definition}

    Throughout the paper, we treat $\delta$ as a fixed universal constant and therefore often suppress it from the notation. The asymptotic behavior of $n(\epsilon;\mathcal C,\mathcal A)$ as $\epsilon\to 0$ characterizes the intrinsic difficulty of learning $\mathcal C$ under the learning model $\mathcal A$. In particular, logarithmic sample complexity corresponds to exponentially fast convergence rates, whereas polynomial sample complexity implies substantially slower learning.
    
    To illustrate this viewpoint, we revisit the examples from the introduction.

    \begin{restatement}[continued]{ex:lin}
        For passive learning, we have $n(\epsilon;\mathcal{H}_{\lin}^d,\mathcal{A}_{\passive}) =\Theta(\epsilon^{-1})$. For pool-based active learning, we have $n(\epsilon;\mathcal{H}_{\lin}^1,\mathcal{A}_{\pool}) =\Theta( \ln(1/\epsilon))$ while $n(\epsilon;\mathcal{H}_{\lin}^d,\mathcal{A}_{\pool}) =\Theta( \epsilon^{-1})$ for any $d\geq 2$. 
        In contrast, under membership query synthesis,  $n(\epsilon;\mathcal{H}_{\lin}^2,\mathcal{A}_{\syn}) =\Theta( \ln(1/\epsilon))$, yielding an exponential improvement over the pool-based setting.
    \end{restatement}

    This example reveals three key phenomena:
    \begin{enumerate}
        \item Active learning can substantially reduce sample complexity compared to the passive learning. 
        \item Pool-based active learning exhibits a sharp transition in learnability. While 1D thresholds admit logarithmic sample complexity, higher-dimensional halfspaces require nearly linear complexity.
        \item Membership query synthesis can be strictly more powerful than pool-based active learning. By querying carefully chosen points outside the observed pool, the learner can recover exponentially fast convergence rates even when pool-based methods cannot.
    \end{enumerate}

    \begin{remark}[Access to the distribution]
    Some access to the underlying distribution $\mu$ is necessary for learning. Without either samples from $\mu$ or prior knowledge of $\mu$, an adversarially chosen distribution may place all of its mass outside the queried points, rendering the observations uninformative. A sampling oracle prevents this pathology and allows the learner to approximate the support of $\mu$ through unlabeled samples. In contrast, access to $\mu$ is implicit in passive learning and pool-based active learning, where the observed data points are assumed to be sampled independently from $\mu$.
    \end{remark}

    \newcommand{\dis}{\textrm{DIS}}
    \newcommand{\starN}{\textsc{s}}
    \subsection{When is exponential speedup possible?}
    The example above suggests that an intriguing distinction among learning models lies in their ability to achieve \emph{exponential speedup}. That is, whether the learner can achieve logarithmic sample complexity or, equivalently, exponentially fast error decay. This leads to the following question:

    \begin{center}
    \emph{Which hypothesis classes admit logarithmic sample complexity in active learning?}
    \end{center}

    For pool-based active learning in the realizable setting, this question was essentially resolved by \citet{hanneke2015minimax}. They introduced a combinatorial quantity called the star number, which completely characterizes when exponential rates are achievable. 
    We briefly restate their definition and result below.

    \begin{definition}[Star number]
        For any set of classifiers $\mathcal{H}$, let $\dis(\mathcal{H}) \defeq \{x \in \mathcal{X}: \exists h,g \in \mathcal{H}, s.t. h(x) \neq g(x) \}$ be the region of disagreement. The star number $\starN(\mathcal{C})$ is the largest integer $s$ such that there exist distinct points $x_1,\dots, x_s \in \mathcal{X}$ and classifiers $h_0,h_1,\dots,h_s \in \mathcal{C}$ such that $\dis(\{h_0,h_i\})\cap \{x_1,\dots,x_s\} = \{x_i\}$ for $i=1,\dots,s$. If no such largest integer exists, define $\starN(\mathcal{C})=\infty$. 
    \end{definition}

    \begin{proposition}[Pool-based learning, Theorem 3 of \protect{\citet{hanneke2015minimax}}]
        Let $\mathcal{C}$ be a class with infinite many hypotheses. 
        For any $\epsilon \in (0, 1/9), \delta \in (0,1/3)$, 
        $n(\epsilon;\mathcal{C},\mathcal{A}_{\pool}) =\Theta( \ln(1/\epsilon))$ when $\starN(\mathcal{C}) < \infty$, otherwise $n(\epsilon;\mathcal{C},\mathcal{A}_{\pool}) =\Omega( \epsilon^{-1})$.
    \end{proposition}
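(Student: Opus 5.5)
We split the argument into three parts: an upper bound when $\starN(\mathcal{C})<\infty$, a universal $\Omega(\ln(1/\epsilon))$ lower bound, and an $\Omega(\epsilon^{-1})$ lower bound when $\starN(\mathcal{C})=\infty$. Throughout, $\delta<1/3$ is fixed and constants may depend on $d=\mathrm{VC}(\mathcal{C})$ and $\starN(\mathcal{C})$.

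\emph{Upper bound.} We take an unlabeled pool of size $m=O\bigl((d\ln(1/\epsilon)+\ln(1/\delta))/\epsilon\bigr)$. By the realizable uniform convergence bound, any hypothesis consistent with $f$ on the whole pool then has error below $\epsilon$ with probability at least $1-\delta/2$. It therefore suffices to identify the labeling of $f$ on the pool using $O(\ln m)=O(\ln(1/\epsilon))$ label requests. The restriction $\mathcal{C}|_S$ has star number at most $s=\starN(\mathcal{C})$ and, by Sauer's lemma, size at most $m^{d}$.

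For each version space $V$ we use the following fact: unless $V$ is nearly determined, there is a point $x\in S$ whose label splits $V$ in a balanced way. We prove this via the teaching-dimension route. For a finite class with star number $s$, any labeling $g$ of $S$ (in $V$ or not) can either be certified by a set of at most $s$ points, or at most $s$ points suffice to eliminate a constant fraction of $V$ (Hegedüs-style extended teaching dimension bounded by $s$, as in Hanneke--Yang). Concretely, we run the majority vote $g$ of $V$ pointwise. Either $g$ is inconsistent with every $h\in V$ on a small set, or some point has nondegenerate majority.

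A cleaner route is to bound the extended teaching dimension $\mathrm{XTD}(\mathcal{C}|_S)\le s$ directly. Given any labeling $g$, we greedily look for a minimal set on which $g$ and the class disagree. If every $h\in V\setminus\{g\}$ needs its own witness, the witnesses form a star, so there are at most $s$ of them. The generalized halving algorithm then uses $O(\mathrm{XTD}\cdot\ln|\mathcal{C}|_S|)=O(s\,d\ln m)=O(\ln(1/\epsilon))$ queries. This XTD $\le s$ step is where we expect the main difficulty: a naive star argument ignores labelings outside the class and points where $g$ agrees with many hypotheses. We would first try taking $h_0$ to be the class member closest to $g$ and argue that minimal disagreement sets of size one yield a star centered at $h_0$.

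\emph{Lower bounds.} Since $\mathcal{C}$ is infinite and has finite VC dimension, for every $\epsilon$ there are $N\gtrsim 1/\epsilon$ hypotheses that are pairwise $2\epsilon$-separated under a suitable $\mu$. Indeed, an infinite class contains arbitrarily long chains or antichains of distinct restrictions, and placing geometric masses on a finite set gives separation. Any learner must distinguish among these hypotheses with binary answers, so a Fano-type argument yields $\Omega(\ln(1/\epsilon))$ queries.

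If $\starN(\mathcal{C})=\infty$, we take a star $x_1,\dots,x_s,h_0,\dots,h_s$ with $s=\lfloor 1/(3\epsilon)\rfloor$. Let $\mu$ put mass $1-s\cdot3\epsilon$ on a point where all these hypotheses agree, or rescale, and mass $3\epsilon$ on each $x_i$. The target is $h_i$ with $i$ uniform. Each query to a pool point $x_j$ reveals only whether $j=i$, and any other point carries no $\mu$-mass, or else reveals at most a star-type bit. Hence after $n\le s/4$ queries, with constant probability the learner has not located $i$ and errs with mass at least $3\epsilon$ on $x_i$. This gives $\Omega(\epsilon^{-1})$.
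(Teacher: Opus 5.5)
The paper does not prove this proposition; it imports it from Hanneke and Yang (2015). Your three-part plan follows the standard argument behind it, which is also the reasoning the paper sketches around Proposition~\ref{prop:star}: an $\epsilon$-net reduction plus Heged\"us-style halving with the extended teaching dimension controlled by the star number, a star construction for $\Omega(\epsilon^{-1})$, and a counting bound for $\Omega(\ln(1/\epsilon))$.

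The load-bearing step of your upper bound, $\mathrm{XTD}=O(s)$, is left open, and the sketch you give would not establish it. Assigning ``one witness per hypothesis of $V\setminus\{g\}$'' does not produce a star, because witnesses can be shared and $|V|$ is unbounded. Choosing the center as the member of $V$ closest to $g$ is also not what makes the argument work. The quickest repair is to cite Proposition~\ref{prop:star}, which is exactly this fact. To make it self-contained, index the star by points instead:
\begin{itemize}
\item Take an inclusion-minimal $U\subseteq S$ with $\bigl|\{\mathrm{VS}_{U,g}(V)\}_S\bigr|\le 1$.
\item Minimality gives, for each $x\in U$, some $h_x\in V$ that agrees with $g$ on $U\setminus\{x\}$ and whose restriction to $S$ is not the surviving one (if any). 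This forces $h_x(x)\neq g(x)$.
\item If some $h^\ast\in V$ agrees with $g$ on $U$, then $h^\ast$ and the $h_x$ form a star on $U$ inside $V\subseteq\mathcal{C}$, so $|U|\le s$.
\item Otherwise $h_{x_1}$ is a star center for $U\setminus\{x_1\}$, so $|U|\le s+1$.
\end{itemize}
Applied to the majority vote of the current version space, this halves it with at most $s+1$ queries per round, giving $O(s\,d\ln m)$ queries overall.

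In the $\Omega(\ln(1/\epsilon))$ bound, your claim of $N\gtrsim 1/\epsilon$ pairwise $2\epsilon$-separated hypotheses is true, but geometric masses cannot deliver it. Under geometric weights, a pair is $2\epsilon$-separated only if it differs among the first $O(\ln(1/\epsilon))$ atoms. With finite VC dimension, Sauer's lemma then allows only polylogarithmically many such hypotheses, so the counting argument would give only $\Omega(\ln\ln(1/\epsilon))$. Use uniform mass instead. Any $N$ distinct hypotheses are pairwise distinguished by the at most $N-1$ splitting points of a decision tree that separates them. The uniform distribution on those points makes them pairwise $1/(N-1)$-separated, and bounding the success probability by $2^n/N$ yields $n\ge\log_2((1-\delta)N)$. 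Your star-number lower bound is fine once $\mu$ is taken uniform on $\lfloor 1/(3\epsilon)\rfloor$ star points, since pool queries can then only land on star points.
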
    
    
    However, as illustrated by \Autoref{ex:lin}, hypothesis classes with infinite star number may still admit logarithmic sample complexity under membership queries. Consequently, the star number no longer characterizes exponential learnability in this setting. This motivates the main theme of the remainder of the paper: identifying complexity measures that characterize exponential learnability under membership query synthesis.

    At the same time, this observation highlights the potential power of membership queries. In the pool-based setting, only a relatively limited family of hypothesis classes have finite star number and therefore admit exponential speedup. By contrast, membership query synthesis appears to enlarge the collection of classes that can be efficiently learned.

    \begin{remark}[Infinite hypothesis class]
    We mainly focus on infinite hypothesis classes since we are interested in the rate of sample complexity as $\epsilon\to0$. For finite classes, the target hypothesis can always be identified exactly using finitely many queries.
    \end{remark}

    \newcommand{\xtd}{\textrm{XTD}}
    \newcommand{\td}{\textrm{TD}}
    \newcommand{\vs}{\textsc{VS}}
    \newcommand{\atd}{\textrm{STD}}
    \newcommand{\vcd}{VC}
    
\section{Failure of Extending Pool-Based Measures} \label{sec:td}

    As a first step toward understanding membership query synthesis, it is natural to ask whether it is sufficient to extend existing complexity measures from pool-based active learning to this more powerful setting. A particularly appealing candidate is the star number, since it completely characterizes exponential speedup in the pool-based realizable setting.

    At first glance, the star number appears unrelated to the query strategy. However, it is closely connected to the notion of teaching dimension. This connection reveals that the star number is fundamentally tied to the structure of pool-based querying. We briefly review the relevant concepts below.

    Let $\mathcal{C}_{S} = \{ \{(x, f(x)), x \in S\}, f \in \mathcal{C} \}$ denote the collection of labelings on a finite set $S$ induced by $\mathcal C$. 

    \begin{definition}[Version Space~\citep{mitchell1977version}] 
        Let $S\in\mathcal X^m$ and let $h\in \{0,1\}^{\mathcal X}$ be any classifier. The version space of $h$ on $S$ with respect to $\mathcal{C}$ is defined as
        $\vs_{S, h}(\mathcal{C}) \defeq \{f \in \mathcal{C}: f(x) = h(x), x \in S \}$. 
    \end{definition}

    That is, $\vs_{S,h}(\mathcal{C})$ consists of all concepts in $\mathcal C$ that agree with $h$ on $S$.

    \begin{definition}[(Extended) Teaching Dimension Growth Function~\citep{hanneke2007teaching}]
        We define the empirical teaching dimension of any classifier $h$ with respect to a set $S$ and concept class $\mathcal{C}$ as
        \begin{align*}
            \td(h,\mathcal{C},S) = \min\biggl\{t: t\geq 0, \exists U \in S^t \ s.t. \ 
            \abs{\{\vs_{U,h}(\mathcal{C})\}_S}
            \leq 1 \biggr\}.
        \end{align*}
        The teaching dimension growth function is
        \begin{align*}
            \td(\mathcal{C}, m) = \sup_{h \in \mathcal{C}} \sup_{S \in \mathcal{X}^m} \td(h,\mathcal{C},S).
        \end{align*}
        By further allowing $h$ to be an arbitrary classifier outside the concept class,
        the extended teaching dimension growth function is defined as
        \begin{align*}
            \xtd(\mathcal{C}, m) = \sup_{h \in 2^{\mathcal{X}}} \sup_{S \in \mathcal{X}^m} \td(h,\mathcal{C},S).
        \end{align*}
    \end{definition}
    \begin{remark}
        The classical teaching dimension was originally defined for finite concept classes on finite domains~\citep{goldman1995complexity, hegedHus1995generalized}. \citet{hanneke2007teaching} adopts this idea to statistical learning with classifiers on infinite domains. In particular, the empirical teaching dimension above is exactly the teaching dimension of $h$ for a concept class projected on $S$. 
        Although $\td(\mathcal C,m)$ and $\xtd(\mathcal C,m)$ differ from the classical teaching dimension, they play an analogous role for finite projections of an infinite concept class. By a slight abuse of terminology, we therefore refer to them as the (extended) teaching dimension as well. 
    \end{remark}

    \begin{proposition}
    [Equivalence between the star number and teaching dimension]
    \label{prop:star}
        \citet[][Theorem 13]{hanneke2015minimax} proved that, 
        for any $m$, 
        $ \min\{\starN(\mathcal{C}),m\} =TD(\mathcal{C},m) = XTD(\mathcal{C},m) $. 
    \end{proposition}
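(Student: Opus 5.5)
We prove the chain $\min\{\starN(\mathcal{C}),m\} \le \td(\mathcal{C},m) \le \xtd(\mathcal{C},m) \le \min\{\starN(\mathcal{C}),m\}$. The middle inequality is immediate, since $\xtd$ takes the supremum over a larger set of $h$. So the work lies in a lower bound on $\td$ via a star configuration, and an upper bound on $\xtd$ by both $m$ and $\starN(\mathcal{C})$.

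\textbf{Lower bound.} Let $s' = \min\{\starN(\mathcal{C}),m\}$. By the definition of the star number (or its monotonicity when $\starN=\infty$), there are points $x_1,\dots,x_{s'}$ and classifiers $h_0,\dots,h_{s'}\in\mathcal{C}$ forming a star: $h_i$ disagrees with $h_0$ on exactly $x_i$ among these points. Take $S$ to be these points, padded with repeated copies of $x_1$ up to length $m$, and take $h=h_0$. Suppose $U\subseteq S$ with $|U|<s'$. Then some $x_j\notin U$. Both $h_0$ and $h_j$ agree with $h_0$ on $U$, yet they give different labelings on $S$. Hence $\abs{\{\vs_{U,h_0}(\mathcal{C})\}_S}\ge 2$, and $\td(h_0,\mathcal{C},S)\ge s'$.

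\textbf{Upper bound by $m$.} Taking $U=S$ leaves at most one labeling on $S$: the labeling of $h$, if it is realizable, and none otherwise. Hence $\xtd\le m$.

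\textbf{Upper bound by $\starN(\mathcal{C})$.} This is the main step. Fix an arbitrary $h$ and $S$, and assume $\starN(\mathcal{C})=s<\infty$; we must exhibit $U$ with $|U|\le s$. I would build $U$ greedily:
\begin{itemize}
\item Start with $U=\emptyset$ and work only with the projected class on $S$.
\item While $\vs_{U,h}(\mathcal{C})$ still induces at least two labelings on $S$, choose a point that shrinks it.
\end{itemize}
A cleaner route is a minimality argument. Let $U$ be a \emph{minimal} subset of $S$ such that $\vs_{U,h}$ projects to at most one labeling on $S$. Such a set exists, because $S$ itself qualifies.

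We claim $|U|\le s$. By minimality, for each $x\in U$ the set $U\setminus\{x\}$ fails. So there are two concepts consistent with $h$ on $U\setminus\{x\}$ that differ somewhere on $S$. At most one labeling is consistent with all of $U$, so at least one of these concepts, call it $g_x$, must disagree with $h$ at $x$.

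The hard part is producing a common center $h_0\in\mathcal{C}$. The fix is to choose $U$ more carefully so that the star forms with points of $U$ as leaves. There are two cases.
\begin{itemize}
\item \textbf{Case $\vs_{U,h}\neq\emptyset$.} Pick $h_0\in \vs_{U,h}$. Then $h_0=h$ on $U$, while each $g_x$ agrees with $h_0$ on $U\setminus\{x\}$ and differs at $x$. So $U$ is a star set and $|U|\le s$.
\item \textbf{Case $\vs_{U,h}=\emptyset$.} Here $h$ is unrealizable on $U$. Remove points one at a time until the version space first becomes nonempty, which gives a maximal realizable prefix. Apply the previous case to this prefix, then add one violating point.
\end{itemize}
The subtlety in the second case is that this last step may add $1$, giving only $s+1$. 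I would handle it by choosing a minimal unrealizable $U$ directly. In such a set, each $U\setminus\{x\}$ is realized by some $g_x\in\mathcal{C}$. Any $g_{x_0}$ then serves as center: for $x\ne x_0$, the concept $g_x$ differs from $g_{x_0}$ exactly at $x$ and possibly at $x_0$. This is where the careful counting (possibly $|U|-1\le s$, with one extra point absorbed) must be checked, as in Theorem 13 of \citet{hanneke2015minimax}. I expect this off-by-one in the unrealizable case to be the main obstacle.
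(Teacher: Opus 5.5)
\textbf{The unrealizable case is not proved.} Your lower bound, the bound by $m$, and your first case (where $\vs_{U,h}(\mathcal{C})\neq\emptyset$ and a member of it is the star center) are all correct. The gap is the unrealizable case, which is the only part of $\xtd(\mathcal{C},m)\le s$ that $\td$ does not already cover.

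Let $U$ be an inclusion-minimal teaching set with $\vs_{U,h}(\mathcal{C})=\emptyset$. For $x\neq x_0$, the concepts $g_x$ and $g_{x_0}$ differ at $x_0$ \emph{necessarily}, not ``possibly'': $g_x(x_0)=h(x_0)\neq g_{x_0}(x_0)$. So you only obtain a star on $U\setminus\{x_0\}$, which gives $|U|\le s+1$.

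No recount can repair this, because the bound is attained:
\begin{itemize}
\item On the six points $(a,b,c,y_a,y_b,y_c)$, take the class $\{011001,011101,101100,101110,110010,110011\}$. One can check that no triple of points carries a star, so $s=2$.
\item Yet for any $h$ with $h(a)=h(b)=h(c)=1$, the set $\{a,b,c\}$ is an inclusion-minimal teaching set. It is unrealizable, while dropping $a$, $b$, or $c$ leaves two concepts that differ at $y_a$, $y_b$, or $y_c$, respectively.
\item A size-$2$ teaching set does exist: for $h=111000$, the set $\{c,y_a\}$ leaves only $011001$. Your minimality argument just cannot find it.
\end{itemize}
So the proof must control \emph{which} teaching set is chosen, not merely take a minimal one.

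\textbf{The missing idea: anchor the teaching set at a concept nearest to $h$.} Given $h$ and $S$, pick $g^*\in\mathcal{C}$ minimizing $|\{x\in S: g^*(x)\neq h(x)\}|$; this exists since $S$ is finite. Let $A=\{x\in S: g^*(x)=h(x)\}$.
\begin{itemize}
\item Take any $g\in\vs_{A,h}(\mathcal{C})$. On $S$, it disagrees with $h$ only inside $S\setminus A$. By the minimal choice of $g^*$, it must therefore disagree on all of $S\setminus A$.
\item With binary labels, this forces $g=g^*$ on $S$. Hence $A$ is a teaching set, and $g^*\in\vs_{U,h}(\mathcal{C})$ for every $U\subseteq A$.
\item Now take $U\subseteq A$ inclusion-minimal among teaching sets. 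Your first-case argument, with center $g^*$, applies verbatim and yields $|U|\le s$.
\end{itemize}
This treats realizable and unrealizable $h$ uniformly (for realizable $h$, $A=S$), so both the case split and the off-by-one disappear. The paper itself gives no argument for this statement and simply defers to Theorem~13 of Hanneke and Yang.
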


    Intuitively, the teaching dimension measures the minimum number of labels needed to uniquely determine the labeling of $S$. The corresponding subset $U\subseteq S$ is called a \emph{teaching set}: once the labels on $U$ are revealed, at most one labeling in $\mathcal C_S$ remains realizable. 

    A crucial feature of this definition is that the teaching set must be selected \emph{from the given sample set $S$}. This restriction mirrors the pool-based setting, where the learner may only query labels of points already present in the pool. 

    In contrast, membership synthesis allows querying arbitrary points in $\mathcal{X}$. It is therefore natural to consider a generalization of (extended) teaching dimension in which the teaching set is no longer restricted to lie inside $S$. Specifically, we introduce the following notion of synthesized teaching dimension.

    \begin{definition}[Synthesized Teaching Dimension] 
        For any $h \in 2^{\mathcal{X}}$, we define the empirical synthesized teaching dimension as 
        \begin{align*}
            \atd(h, \mathcal{C}, S) = \min\{t \geq 0: \exists U \in \mathcal{X}^t, \abs{\{\vs_{U,h}(\mathcal{C})\}_S} \leq 1 \}.
        \end{align*}
        Such $U$ is called a teaching set of $h$ on $S$. 
        We further define the synthesized teaching dimension growth function as
        \begin{align*}
            \atd(\mathcal{C}, m) \defeq \sup_{h \in 2^{\mathcal{X}}} \sup_{S \in \mathcal{X}^m} \atd(h, \mathcal{C}, S).
        \end{align*}
    \end{definition}

    Briefly speaking, the synthesized teaching dimension asks how many arbitrarily chosen queries are sufficient to determine the labeling of $S$. Equivalently, $f(x) = h(x)$ on $U$ implies that $f(x) = h(x)$ on $S$.


    At first glance, this quantity appears promising. Indeed, it behaves consistently with several canonical examples, as shown in Proposition~\ref{prop:atd_examples}. The proofs are deferred to Appendix~\ref{sec:app_proof}.

    \begin{proposition}[Examples of Synthesized Teaching Dimension]\label{prop:atd_examples}
        \begin{enumerate}
            \item (Halfspaces have finite STD) We have $\atd(\mathcal{H}_{\lin}^d, m) \leq d\cdot 2^d$ for every sufficiently large $m$. 
            \item (Intervals have infinite STD) Let $\mathcal{C}$ be class of 1D intervals on $[0,1]$, then, $\atd(\mathcal{C}, m) \geq m-1$. 
        \end{enumerate}
    \end{proposition}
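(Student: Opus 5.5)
\textbf{Part 2 (intervals).} This is the easy direction and I would do it first. Take $S=\{x_1,\dots,x_m\}$ with $x_i=i/m$ and let $h\equiv 0$, which lies in $\mathcal{C}$ as a degenerate or empty interval, or in any case in $2^{\mathcal X}$. Let $U\in\mathcal X^t$ be any query set. Every singleton interval $[x_i,x_i]$ with $x_i\notin U$ agrees with $h$ on $U$, so it belongs to $\vs_{U,h}(\mathcal C)$. These singletons induce pairwise distinct labelings of $S$. The interval $[x_i,x_i]$ for $x_i\notin U$ needs no further condition: it is zero on every $u\in U$ automatically, since $u\neq x_i$.

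Hence $\abs{\{\vs_{U,h}(\mathcal C)\}_S}\geq m-\abs{U\cap S}\geq m-t$. Requiring this to be at most $1$ forces $t\geq m-1$. Therefore $\atd(h,\mathcal C,S)\geq m-1$, and so $\atd(\mathcal C,m)\geq m-1$.

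\textbf{Part 1 (halfspaces), setup.} Fix $h\in 2^{\mathcal X}$ and $S$, and write $P=\{x\in S:h(x)=1\}$ and $N=S\setminus P$. I would split into two cases according to whether $h|_S$ is realizable by some halfspace.

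\textbf{Part 1, non-realizable case.} If $h|_S$ is not realizable, then $P$ and $N$ cannot be separated by a hyperplane $a^\T x+b=0$ with the non-strict/strict convention of $\mathcal H_{\lin}^d$. By Kirchberger's theorem (the Radon/Helly-type separation theorem) there is a sub-configuration $U\subseteq S$ of at most $d+2$ points that is already non-separable. Its version space is empty, so $\abs{\{\vs_{U,h}\}_S}=0\le 1$ with $t\le d+2\le d2^d$.

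\textbf{Part 1, realizable case: plan.} Here the version space is nonempty and we must show it induces a unique labeling of $S$. The plan is to exploit that both $\{f=1\}$ and $\{f=0\}$ are convex, so that positivity is inherited by convex hulls. I would decompose $[0,1]^d$ into the $2^d$ orthant-type cells obtained by comparing each coordinate with a reference point, for example the centre of the cube.

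In each cell, I would choose $d$ synthesized points with two properties:
\begin{itemize}
\item the points lie on the true decision boundary, pushed slightly to the appropriate side, and are built from a fixed separating hyperplane of $P$ and $N$ with positive margin on the finite set $S$;
\item any halfspace that agrees with $h$ on these points cannot cross between $P\cap\text{cell}$ and $N\cap\text{cell}$.
\end{itemize}
The intended mechanism is that $d$ points pin down a hyperplane up to a "wedge", and restricting to a cell makes the wedge avoid $S$. Collecting the points over all cells gives at most $d\cdot 2^d$ points. The phrase "for sufficiently large $m$" is only there because the bound is compared with $m$ as in $\min\{\cdot,m\}$.

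\textbf{Expected obstacle.} The genuine difficulty is this forcing step. Because $h$ is arbitrary, the labels $h$ assigns to synthesized points need not match any halfspace. I would handle this by a dichotomy on each chosen point set. Either $h$'s labels on it are inconsistent with the halfspaces realizing $h|_S$, in which case a small certificate again empties the version space, as in the non-realizable case. Or they are consistent, in which case the convexity/wedge argument forces the labeling of $S$.

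Making the wedge argument rigorous, i.e., showing that $d$ points per cell suffice to exclude every competing halfspace labeling of that cell's portion of $S$, is where I expect most of the work. I would first verify it carefully in $d=1$, where two points, namely the largest negative and smallest positive, suffice, and in $d=2$ before generalizing.
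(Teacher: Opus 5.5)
Your Part 2 is the paper's argument, and your non-realizable case matches the paper's Case 1. Kirchberger's theorem is the precise form of the paper's appeal to Carath\'eodory. Note that $d+2\le d2^d$ holds only for $d\ge 2$, which is why the paper treats $d=1$ separately.

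The gap is in the realizable case, at exactly the step you flagged, and the cell/wedge mechanism does not work as described. A few points placed near the boundary and pushed to one side or the other impose only local linear inequalities. A hyperplane can satisfy them while pivoting near those points and sweeping across points of $S$ elsewhere in the cell, and the cell's size does not shrink with the margin. For instance, in $d=2$ let the boundary be $x_1+x_2=0.2$, which meets only the lower-left cell, and let $(0.2,0.02)\in P$. If your two points are $(0.1,0.1\pm\delta)$, then both halfplanes $\{x_1+x_2\ge 0.2\}$ and $\{x_2\ge 0.1\}$ are consistent with them, yet they disagree at $(0.2,0.02)$. If instead both points are pushed to the same side, a constant classifier is consistent.

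The missing idea, which the paper uses, is to query two \emph{walls} spanning the whole cube. Take a strict separator $\ell$ of $P$ and $N$, put $c_+=\min_{P}\ell$ and $c_-=\max_{N}\ell$, and query the vertices of the two cross-sections $\{\ell=c_\pm\}\cap[0,1]^d$. These are the points where the hyperplanes cross the cube's edges, at most $d2^{d-1}$ each. Suppose $g\ge 0$ on the first set and $g<0$ on the second. Convexity of $\{g\ge 0\}$ and $\{g<0\}$ extends this to the whole cross-sections. For $p\in P$, the segment from any point of the $c_-$ wall to $p$ meets the $c_+$ wall inside the cube, so affineness of $g$ along it forces $g(p)\ge 0$. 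The argument for $N$ is symmetric.

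Your dichotomy for arbitrary $h$ also does not close. Suppose ``consistent'' means $h|_{S\cup U}$ is realizable. The forcing argument still needs $h$ to give $U$ the \emph{intended} labels, and another halfspace realizing $h|_{S\cup U}$ need not do so. Suppose instead it means ``agrees with the fixed separator''. Then a mismatch need not make $h|_{S\cup U}$ non-realizable, so Kirchberger gives no certificate.

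The paper's write-up tacitly assumes that $h$ labels the walls as intended, so this has to be handled regardless. One repair is to apply Kirchberger to \emph{any} finite subset of $[0,1]^d$ on which $h$ is non-realizable, not only to subsets of $S$. If no such subset exists, then $h^{-1}(1)$ and $h^{-1}(0)$ are complementary convex subsets of the cube. The walls should then be built from a hyperplane separating these two sets, so that $h$ automatically labels the walls correctly, rather than from an arbitrary separator of $S$. Points of $S$ lying on that hyperplane need separate treatment.
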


    Unfortunately, 
    the following result shows that synthesized teaching dimension is overly optimistic and fails to capture the learning difficulty with membership queries. 

    \begin{theorem}[Counterexample, piecewise linear classifier.]\label{thm:atd}
    There exists a class $\mathcal{C}$ consisting of continuous function with exactly four linear pieces such that $\atd(\mathcal{C}, m) \leq 2$ yet any membership-query algorithm requires polynomial sample complexity.

    \end{theorem}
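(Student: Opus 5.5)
We construct a class on $\mathcal X=[0,1]^2$ (or on $[0,1]\times\real$) whose members are subgraph indicators $f_g(x)=\ind_{x_2\le g(x_1)}$, where $g$ is a continuous piecewise linear ``tent-with-a-notch'' function with exactly four linear pieces. The idea is to make the labels on any finite $S$ determined by very few synthesized queries, while still embedding the hard interval structure of \Autoref{ex:int} along a one-dimensional set that the learner cannot exploit.

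\textbf{Construction.} Concretely, for a parameter $a\in[0,1]$ let $g_a$ be zero outside $[a-w,a+w]$ and have a narrow spike of height $1$ centered at $a$. Its four pieces are the flat piece on the left, the rising piece, the falling piece, and the flat piece on the right; continuity forces these to meet at the kink points $a-w,a,a+w$. We intend to choose the width $w$ as a fixed function that shrinks with the target accuracy. Alternatively, and better suited to a single class, we let the slope be a free parameter $s$, so that $\mathcal C=\{f_{a,s}\}$ and the spike width is $1/s$. The positive region near the baseline then behaves like an interval of width $\sim 1/s$ located at $a$.

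\textbf{Step 1 ($\atd(\mathcal C,m)\le 2$).} Fix $h$ and a finite $S$. We exhibit a teaching set of at most two points. If the label pattern $h$ restricted to $S$ is consistent with some $f_{a,s}$, we locate the spike by querying its apex region. A point $(a,x_2)$ with $x_2$ just below $1$ is positive only if the apex sits over $a$ with width covering it. A second query just outside the spike, at height chosen below the lowest positive point of $S$, then pins down the set of $(a,s)$ consistent on $S$. Because $S$ is finite, we can choose these two query points with coordinates depending on $S$: a ``needle'' point positive only for parameters in a tiny neighborhood. This forces every consistent $f$ to induce the same labeling on $S$. If $h$ restricted to $S$ is realizable by nobody, zero queries suffice; if $h$ is inconsistent with $\mathcal C$, one or two queries that eliminate everything suffice. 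The key fact is that with synthesized queries one may place points arbitrarily close to the apex, so a single positive answer at height $1-\eta$ localizes $a$ to within $O(\eta/s)$. Choosing $\eta$ smaller than the minimal gap among the coordinates of $S$ determines the labeling.

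\textbf{Step 2 (polynomial lower bound).} Take $\mu$ uniform on the baseline segment $\{(t,0):t\in[0,1]\}$ and fix $s\approx 1/\epsilon$. Then $f_{a,s}$ restricted to the support is the interval indicator of $[a-1/s,a+1/s]$. We use $N\approx 1/(4\epsilon)$ well-separated candidate centers $a_1,\dots,a_N$, each pair differing in error by at least $\epsilon$. Each membership query $q$ anywhere in the plane is positive for at most $O(1)$ of these candidates, since the spikes of separated candidates are disjoint. A standard adversary argument therefore shows that each query eliminates $O(1)$ candidates. Identifying the target with constant probability thus takes $\Omega(N)=\Omega(\epsilon^{-1})$ queries, mirroring \Autoref{fig:main_ex}(a). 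Properness, or the appendix remark on improper learning, handles the output.

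\textbf{Main obstacle.} The tension is between Step 1, which wants queries that can localize the apex finely, and Step 2, which needs each query to be informative about only $O(1)$ candidates. The resolution is that teaching sets may depend on the known $h$ (the adversary reveals the answer pattern), whereas a learner does not know where the apex is. Finding the apex is a needle-in-haystack search. The delicate part is verifying Step 1 for every $h$ and $S$, including when the slope $s$ is unknown. We would try to handle $s$ by having the second query fix the spike's footprint on the lowest row of $S$.
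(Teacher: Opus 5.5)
Your Step~1 is false for the class you propose, and this breaks the proof. Take $h=\ind\{x_2\le 0\}$, the ``spikeless'' pointwise limit of ever-thinner tents, which is not in $\mathcal C$. Take $S=\{(t_j,1/2)\}_{j=1}^m$ with distinct $t_j$. Every $f\in\mathcal C$ agrees with $h$ at heights $\le 0$ and $>1$, so only queries at heights in $(0,1]$ carry information, and $h$ labels all of them $0$. A $0$-answer at $(u_1,u_2)$ forces $f(t_j,1/2)=0$ only if $u_1=t_j$ and $u_2\le 1/2$. Otherwise, a sufficiently thin tent centered at (or just beside) $t_j$ is negative at every query point yet positive at $(t_j,1/2)$. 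Hence each query certifies at most one point of $S$, and any $U$ with $|U|<m$ leaves at least two labelings of $S$ (all zeros and some $e_j$). So $\mathrm{STD}(\mathcal C,m)\ge m-1$. This is exactly the interval argument of Proposition~\ref{prop:atd_examples}(2), since your class contains short intervals on every horizontal line.

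Your case analysis never meets this $h$. The apex ``needle'' presupposes that $h$ exhibits an apex. The claim that an $h$ outside $\mathcal C$ is refuted by one or two queries also fails here, because every finite restriction of this $h$ is realized by some thin tent. Two smaller slips: ``zero queries suffice when $h|_S$ is unrealizable'' is wrong, since with $U=\emptyset$ the version space is all of $\mathcal C$; and in Step~2, points at height $0$ are positive for every $f$, so your test line must sit slightly above the baseline. The tension you flag as the main obstacle is not resolved by letting teaching sets depend on $h$. The same arbitrarily thin spikes at arbitrary locations that power Step~2 generate this uncertifiable limit $h$, and capping the slope removes both.

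The missing idea is to tie all the ``which bump'' freedom to a single ordered index that a teacher can bracket with two queries. The paper uses a countable class $f_n=\ind\{y\le \beta_n+\tau_{k,i}(x)\}$ with strictly increasing baselines $\beta_n=\frac14-\frac1{8n}$. The markers $z_t=(0,(\beta_t+\beta_{t+1})/2)$ then satisfy $f_n(z_t)=\ind\{n>t\}$, which gives three cases:
\begin{itemize}
\item a $0$ followed later by a $1$ is refuted by two markers;
\item a first $0$ at $z_n$ is pinned down by $z_{n-1},z_n$;
\item if $h$ answers $1$ on every marker, then some tail $\{f_n:n>T\}$ agrees on the finite $S$, because the bump intervals are disjoint and each is used by only finitely many indices. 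The single marker $z_T$ then suffices.
\end{itemize}
That last case is precisely where your construction fails. Hardness in the paper comes from combinatorial scrambling, not from spikes at unknown locations. Within a block $B_k$ of size $k2^k$ the bump number cycles with period $k$. Every planar query reduces to a threshold query $\ind\{n>t\}$ plus one bump-membership query. Lemma~\ref{lem:auxiliary} then shows that $\Omega(k)$ such queries are needed to identify the bump number, i.e.\ to get error below $1/k$ under the uniform distribution on $S_k$.
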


\section{Conjecture: Halving as the Fundamental Mechanism}\label{sec:half}
    
    The failure of synthesized teaching dimension suggests that simply extending existing pool-based complexity measures is insufficient. We therefore seek a more algorithmic principle underlying exponential learning.

    A key insight from pool-based active learning is that exponential convergence is intimately connected with the ability to efficiently \emph{halve} the version space. In particular, \citet{hegedHus1995generalized} proposed a membership-halving algorithm that halves the version space by querying at most $\xtd(\mathcal{C},m)$ points. Since $\xtd(\mathcal{C},m) = O(1)$ is equivalent to a finite star number (Proposition~\ref{prop:star}), such efficient halving is both sufficient and necessary for exponential convergence in the pool-based setting.

    Crucially, however, this equivalence relies on the restriction that all queries must be selected from the finite set $S$. Once synthesized queries are permitted, it is no longer clear whether the same principle continues to govern learnability. This naturally raises the following question: 
    \emph{Is efficient halving still the fundamental mechanism behind exponential convergence under membership query synthesis?} 

    To investigate this question, we first establish one characterization of exponential convergence under deterministic membership queries.

    \subsection{A characterization via adaptive identification}
 
    Let $\mathcal{A}_t$ denote the collection of all deterministic membership-query-based algorithms that select up to $t$ queries, and let $U=\{X_1,\dots,X_t\}$ be the set of queries. Here, deterministic means that each $X_i$ is uniquely determined by the query history; nonetheless, the selected queries may depend on the unknown target through the previously observed labels. We discuss randomized query strategies in the Appendix~\ref{sec:app_determin}.

    \newcommand{\utd}{\textrm{AIC}}
    \begin{definition}[Adaptive identification complexity (AIC)]
    We define the adaptive identification complexity as
    \begin{align*}
           \utd(\mathcal{C}, m) \defeq \sup_{S \in \mathcal{X}^m} \min\biggl\{t \geq 0: \exists A \in \mathcal{A}_t, s.t.\  \sup_{f \in \mathcal{C}} 
           \abs{\{\vs_{U,f}(\mathcal{C})\}_S}
           \leq 1\biggr\}.
    \end{align*}
    \end{definition}

    Unlike synthesized teaching dimension, which allows a different specifying set for each target hypothesis, $\utd$ requires the existence of a \emph{single adaptive query strategy} that succeeds uniformly for every target $f\in\mathcal C$. $\utd$ can be viewed as an algorithmic analogue of sample complexity, formulated in terms of exact identification on finite samples.

    We can verify that $\utd(m) = O(\ln(m))$ implies exponential learning for VC classes (Appendix~\ref{sec:app_utd}). However, unlike the star number or teaching dimension, \utd \  depends explicitly on the existence of an adaptive querying algorithm and is therefore difficult to compute or analyze directly. This motivates the search for a more structural characterization.
    


    
    \subsection{Halving as a Structural Principle}\label{subsec:halve}
    The above discussion suggests that exponential learning may still arise from repeatedly halving of the candidate space. We therefore introduce the following notions.

    \newcommand{\cd}{\textrm{SHD}}
    \begin{definition}[Synthesized Halving Dimension]
        The synthesized halving dimension of $\mathcal{C}$ is defined as 
        \begin{align*}
        \cd(\mathcal{C}, m) \defeq \sup_{S\in \mathcal{X}^m} \min\biggl\{t \geq 0: \forall V \subseteq \mathcal{C}, \exists U \in \mathcal{X}^t, s.t. \ \sup_{f \in \mathcal{C}} 
        \abs{\{\vs_{U,f}(V)\}_S}
        \leq \lceil \abs{V_S}/2 \rceil \biggr\} .
    \end{align*}
    \end{definition}

    Namely, $\cd(\mathcal{C}, m)$ queries are enough to halve the possible labelings on any $S$ given any version space $V$. Here, $V$ is not necessarily all possible concepts, but allowing refinement, e.g., when we already observed the labeling on some points. We note that the set $U$ above is pre-determined and does not rely on $f$, so \cd \ is a combinatorial complexity like teaching dimension. 
    

    \begin{remark}[Why halving version space instead of unlabeled points]
        That is because halving unlabeled points does not capture the landscape of learning concepts. The labels of points can be correlated, so it is not necessary to halve number of unlabeled points. We provide one such example below. Consider a domain $\{x: x \in \{0,1\}^n\} \cup \{y_1,\dots,y_n\}$ and take $S = \{x: x \in \{0,1\}^n\}$. The hypothesis class is $f_z$, $z\in \{0,1\}^n$ such that $f_z(x) = \ip{x}{z} \ mod \ 2$ and $f_z(y_i)=z_i$, where $z_i$ is the $i$-th coordinate of $z$. Querying $y_i$'s leads to full labeling of $S$, and querying less than $n-1$ points cannot halve $S$. 
    \end{remark}

    It is straightforward to verify that $\cd(m)=O(1)$ guarantees $\utd(m) = O(\ln(m))$ and therefore 
    exponential convergence.
    The converse direction, however, is far less clear. 

    \begin{conjecture} \label{conj:halve}
    If $\utd(m) = O(\ln(m))$, then $\cd(m)=O(1)$.
    \end{conjecture}

    The conjecture is motivated by an information-theoretic intuition. Since each binary query reveals at most one bit of information, $O(\ln(m))$ queries are essentially optimal. Thus, if $\utd(m) = O(\ln(m))$, some queries along the interaction must substantially reduce the remaining uncertainty. Moreover, one would expect such informative queries to occur regularly, say, once every constant number of steps. Intuitively, it is difficult to imagine that the first highly informative query could be delayed arbitrarily far as $m$ grows. 

    The difficulty lies in making this intuition rigorous. Although $O(\ln m)$ total queries suffice, it is unclear whether a constant-factor reduction in uncertainty must occur within constantly many steps. In principle, there may exist a sequence of increasingly difficult instances for which the early queries reveal very little information, while only the later queries become highly informative. As we shall see shortly, this phenomenon can occur for finite hypothesis classes. 

    At present, it remains unclear whether such a phenomenon can occur for infinite hypothesis classes. Resolving Conjecture~\ref{conj:halve} would therefore lead to one of two fundamentally different conclusions.

    (1) There exists an infinite hypothesis class such that $\cd(m) = \utd(m) = O(\ln(m))$. This would demonstrate that efficient halving is not necessary for exponential convergence under membership query synthesis, revealing a learning mechanism fundamentally different from that in pool-based active learning.

    (2) Alternatively, one may prove Conjecture~\ref{conj:halve}. In this case, efficient halving would remain the fundamental mechanism underlying exponential convergence.

    \textbf{Finite classes that are hard to halve initially.}
    We construct a family of finite concept classes for which $\utd$ is logarithmic, while the synthesized halving dimension is also logarithmic.

    For any integer $n\ge 1$, let $\mathcal{I}=\{1,\ldots,n\}$ be an index set, $\mathcal{Q} = \{q_1,\dots, q_{2^n}\}$, $\mathcal{Q}^i, i=1,\dots,n$ be independent copies of $\mathcal{Q}$, whose elements are denoted as $q^i_j, j=1,\dots,2^n$, and $\mathcal{V}=\{v_j^{i}, i=1,\dots,n,j=1,\dots,n\}$. 

    We define a concept class over the domain $\mathcal{X} = \mathcal{I} \cup \mathcal{V} \cup \bigcup_{i=1}^n \mathcal{Q}^i$. Each hypothesis is index by $i^* \in \{1, \dots, n\}$ and $j \in \{0, \dots, 2^n-1\}$:
    \begin{align*}
        f_{i^*,j^*}(i) &= \ind_{i=i^*}, i \in \mathcal{I};
        \\ f_{i^*,j^*}(v_j^{i}) &= \ind_{i=i^*} \wedge z^*_{j}, v_j^{i} \in \mathcal{V};
        \\ f_{i^*,j^*}(q^i_j) &= \ind_{i=i^*} \wedge \ind_{j=j^*}, q^i_j \in \mathcal{Q}^i,
    \end{align*}
    where $z^* \in \{0,1\}^n$ be the binary representation of $j^*$. 

    Intuitively, each hypothesis selects a distinguished tree indexed by $i^*$. All nodes in the remaining trees receive label $0$, while the selected tree contains a unique positive leaf indexed by $j^*$. The bits of $j^*$ are encoded on the auxiliary vertices $\mathcal{V}$. The resulting concept class contains $m\defeq n2^n$ hypothesis. We visualize the concept class in \Autoref{fig:tree} in the Appendix~\ref{sec:app_fig}. 

    Consider the unlabeled sample $S=\bigcup_{i=1}^n \mathcal{Q}^i$. Clearly, $\abs{S}=m$. An active learner can first query all points in $\mathcal{I}$ to identify $i^*$, requiring at most $n$ queries. It can then query all vertices in $\{v^{i^*}_j,j=1,\dots,n\}$ to recover the binary representation $z^*$ and hence determine $j^*$, requiring another $n$ queries. Consequently, $\utd(m)=2n=O(\ln(m))$. On the other hand, to eliminate even half of the candidate hypotheses, the learner must eliminate half of the index set. Thus, any strategy requires at least $n/2$ queries in the worst case before the version space can be halved, leading to $\cd(m) \geq n/2 = \Omega(\ln(m))$.

    Since this construction can be carried out for every $n$, it yields an infinite sequence of finite concept classes that are easy to identify globally, yet difficult to halve at the initial stages. 
    One possible route toward resolving Conjecture~\ref{conj:halve} is to investigate whether an analogous separation exists for infinite concept classes, or whether such a separation is fundamentally impossible. ~\footnote{One might suspect that the disjoint union of the above class over all $n$ would serve to prove Conjecture~\ref{conj:halve}. However, the resulting infinite class would not admit $AIC(m) = O(\log m)$. Details are in Appendix~\ref{sec:app_union}.}

    \textbf{Classes admitting constant synthesized halving dimension.} As a weaker version of Conjecture~\ref{conj:halve}, we proved the following result. 

    \begin{theorem}[Near-optimal AIC implies constant halving]\label{thm:halve_near}
        If there exists a constant $B>0$, such that $\utd(\mathcal{C},m) \leq \log(m) + B$ for every $m$, then we have 
        $\cd(\mathcal{C},m) \leq 2^{B+2}-1$ for every $m$. 
    \end{theorem}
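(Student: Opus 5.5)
The idea is to build the halving set $U$ from the top levels of the decision tree of a near-optimal adaptive identifier. The identifier is run on a sample that is only as large as the number of labelings we need to halve. Throughout, $\log$ is base $2$, matching the information-theoretic reading of $\utd$.

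\textbf{Step 1: reduction to a small distinguishing sample.} Fix $S\in\mathcal X^m$ and $V\subseteq\mathcal C$, and let $N=\abs{V_S}$. If $N\le 1$ there is nothing to prove, so take $U=\emptyset$. Otherwise $V_S$ is a set of $N$ distinct binary vectors indexed by $S$. By the standard fact that $N$ distinct vectors can be separated by at most $N-1$ coordinates (add coordinates one at a time, each strictly increasing the number of classes in the induced partition), there is $S'\subseteq S$ with $\abs{S'}=m'\le N-1$ such that the restriction map $V_S\to V_{S'}$ is a bijection. It therefore suffices to halve $V_{S'}$. If the definition requires exact size, repeat a point so that $m'=N-1$; this is harmless.

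\textbf{Step 2: truncating the identification tree.} By hypothesis, $\utd(\mathcal C,m')\le \log m' + B$. Hence there is a deterministic adaptive algorithm $A$ with $t\le \log(N-1)+B$ queries such that, for every $f\in\mathcal C$, the queries $U_f$ and labels $f(U_f)$ along its path leave at most one labeling of $\mathcal C$ on $S'$. View $A$ as a binary decision tree of depth $t$. Set $k=B+2$ (round up if $B$ is not an integer; the bound $2^{B+2}-1$ suggests $B$ is treated as an integer). Let $U$ be the set of all query points at the internal nodes of depth $<k$. This gives $\abs{U}\le 2^k-1=2^{B+2}-1$ points, chosen independently of $f$, as $\cd$ requires.

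For any $f\in\mathcal C$, the labels $f(U)$ determine the path of $A$ through the first $k$ levels, and so determine a depth-$k$ node $\nu$. Every $g\in\vs_{U,f}(V)$ agrees with $f$ on the queries along that path, so $g$'s run of $A$ also reaches $\nu$. The labeling $g_{S'}$ is therefore the unique labeling identified at some leaf of the subtree rooted at $\nu$. Distinct labelings need distinct leaves, so
\begin{align*}
\abs{\{\vs_{U,f}(V)\}_{S'}} \le 2^{t-k} \le \frac{(N-1)2^{B}}{2^{B+2}} < \frac{N}{4} \le \Bigl\lceil \frac{N}{2}\Bigr\rceil .
\end{align*}
Pulling this back through the bijection of Step 1 gives the same bound for $\abs{\{\vs_{U,f}(V)\}_S}$. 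Since $S$ and $V$ were arbitrary, $\cd(\mathcal C,m)\le 2^{B+2}-1$.

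\textbf{Main obstacle.} The delicate point is Step 1. Applying the hypothesis to $S$ directly would bound subtree sizes by $m2^B/2^k$, and this need not be small compared with $\abs{V_S}$ when $V$ is a small refined version space. Shrinking to a sample of size below $N$ makes the $\log m$ term match $\log\abs{V_S}$. I also expect to need care with two details. First, the "at most one labeling per leaf" claim must be read for the full class $\mathcal C$, which contains $V$, so leaves can only over-count. Second, the definition of $\utd$ only yields an algorithm for each fixed sample, and that per-sample existence is exactly what the construction uses.
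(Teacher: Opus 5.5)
Your proposal is correct and follows essentially the paper's argument. You restrict to at most $N-1$ points of $S$ that separate $V_S$, so the hypothesis gives an identification tree of depth at most $\log N + B$; you take $U$ to be every query in the top levels of that tree; and you bound the surviving labelings by the number of leaves below a single node. The differences are minor:
\begin{itemize}
\item The paper cuts at the adaptive depth $r = \max\{0, D - \lfloor \log(N/2)\rfloor\} \leq B+2$, so non-integer $B$ needs no rounding. Your rounded cutoff $k = \lceil B\rceil + 2$ could give $|U|$ above $2^{B+2}-1$ in that case. Your slack ($(N-1)/4$ against $\lceil N/2\rceil$) means $k = \lceil B\rceil + 1 \leq B+2$ already suffices.
\item Your explicit use of the tree for $\mathcal{C}$ on $S'$, together with the bijection $V_S \to V_{S'}$, makes rigorous the step the paper states loosely as $D \leq \mathrm{AIC}(V,N)$.
\end{itemize}
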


    \Autoref{thm:halve_near} indicates that a nearly optimal $\utd$ does imply a constant $\cd$. Intuitively, this is because the assumed query complexity matches the information-theoretic lower bound in both order and leading constant. Consequently, only a constant number of queries can be substantially unbalanced; otherwise, the accumulated information loss would force the total query complexity above the assumed bound. Thus, the corresponding decision tree must be nearly balanced and have logarithmic depth. That said, a nontrivial gap remains between $\utd(m) = \log(m)+B$ and $\utd=\Omega(\log(m))$, which we leave as future work.

    We conclude this section by discussing a possible structural property that may lead to a constant synthesized halving dimension. 
    Many commonly studied concept classes seem to exhibit a form of \emph{homogeneity}: restricting attention to a local region of the instance space neither significantly simplifies nor complicates the learning problem. Intuitively, for such classes, a constant number of appropriately chosen queries may suffice to reduce the remaining uncertainty by a constant factor, potentially implying $\cd(m)=O(1)$. Formalizing this notion of homogeneity and identifying the conditions under which it indeed guarantees a constant synthesized halving dimension remain interesting open questions.

\section{Exponential Convergence for Halfplanes}\label{sec:lin_syn}
        
    In this section, we prove that two-dimensional linear classifiers are exponentially learnable under membership query synthesis. While this result is known via reductions to the \emph{point location} problem in computational geometry~\citep{meiser1993point, hopkins2020point}, our proof follows a different route and works directly in the primal space.
    
    The key idea is to exploit the geometric structure of $\real^2$. Specifically, we construct a querying strategy that identifies the labels of a constant fraction of an arbitrary finite set using only a constant number of synthesized queries. Repeating this procedure yields exponentially fast convergence.
    
    \begin{theorem}\label{thm:2d}
        For any distribution $\mu$ on $[0,1]^2$, and any $f \in \mathcal{H}_{\lin}^2$, 
        there exists an active learning algorithm $A$ and a constant $C$ such that 
        \begin{align*}
            \norm{\hat{f}_n-f} \leq e^{-Cn}.
        \end{align*}
        Equivalently, for any $\epsilon > 0$, at most $O(\ln(1/\epsilon))$ queries are needed to achieve $\norm{\hat{f}_n-f}_2^2 \leq \epsilon$.
    \end{theorem}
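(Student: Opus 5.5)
The plan is to reduce the statement to a constant-query halving step on finite pools, and then use the reduction from \Autoref{sec:half}: a bound $\cd(\mathcal{H}_{\lin}^2,m)=O(1)$ gives $\utd(\mathcal{H}_{\lin}^2,m)=O(\ln m)$, which gives exponential learning for VC classes (Appendix~\ref{sec:app_utd}). Concretely, I would draw an unlabeled pool $S$ of size $m=\Theta(\epsilon^{-1}\ln(1/\epsilon))$ from the sampling oracle. Uniform convergence for VC dimension $3$ then ensures that any halfplane agreeing with $f$ on all of $S$ has error at most $\epsilon$ with probability at least $1-\delta$. It is therefore enough to recover the labels of all of $S$ with $O(\ln m)=O(\ln(1/\epsilon))$ synthesized queries, and then output any consistent $\hat f_n$.

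\textbf{Core geometric step.} Let $V$ be the current version space of halfplanes consistent with past answers, and let $P\subseteq S$ be the points whose labels are not yet forced by $V$. I want $O(1)$ queries that determine the labels of at least a constant fraction of $P$. Note that $V$ is an intersection of constraints that are linear in $(a,b)$, so it is a convex cone in parameter space.

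\textbf{Building the halving queries.} Take a ham-sandwich cut, or more simply a vertical line $\ell_1$ splitting $P$ into equal halves, and then a second line $\ell_2$ such that each of the four quadrants contains about $|P|/4$ points. Let $c=\ell_1\cap\ell_2$. The boundary of $f$ is a line $L$, and a line misses the interior of at least one of the four open quadrants determined by $\ell_1,\ell_2$. Every point in a quadrant that $L$ misses gets the same label. So I need queries that (i) reveal which quadrant $L$ misses and (ii) reveal that quadrant's common label.

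To do this, I would query $c$ together with points on $\ell_1$ and $\ell_2$ placed far out, beyond the convex hull of $P$ (rescaling so the domain is large enough). A line crosses $\ell_1$ at most once, so the labels of $c$ and of the two far points on $\ell_1$ tell us on which ray of $\ell_1$, if any, $L$ crosses. The same holds for $\ell_2$. These six labels determine which quadrant boundary rays $L$ crosses. A quadrant cone whose two bounding rays both carry a uniform label, with no crossing, and which is convex, must contain no part of $L$ within its bounded portion containing $P$ (otherwise $L$ would enter and leave through the bounding rays). It is therefore uniformly labeled on the points of $P$ inside it. A short case analysis on the crossing pattern should show that at least one quadrant always qualifies.

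\textbf{Handling queries outside the domain.} If queries must stay in $[0,1]^2$, the far points are replaced by the intersections of $\ell_i$ with the boundary of the square, which still bracket $P$. The argument then uses convex hulls within the square.

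\textbf{Iteration.} Each round costs $6$ queries and labels at least $|P|/4$ points, so after $O(\ln m)$ rounds all of $S$ is labeled. This gives $n=O(\ln(1/\epsilon))$, equivalently $\norm{\hat f_n-f}\le e^{-Cn}$.

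\textbf{Main obstacle.} The hard part is the case analysis guaranteeing a fully determined quadrant. The issue is $L$ passing exactly through $c$, or crossing both lines near $c$. In that situation $L$ passes through two opposite quadrants and misses the other two, and I must confirm the missed ones are identifiable from the six labels. If this fails, I would fall back to binary-searching the crossing point of $L$ along $\ell_1$ to precision finer than the gaps between projections of $P$. Because $P$ is finite this takes $O(\ln m)$ queries, and it should be done once globally rather than per round, to preserve the overall $O(\ln m)$ bound.
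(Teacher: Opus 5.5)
Your plan works, and the step you flag as the main obstacle follows at once from your own remark that $L$ meets each $\ell_i$ at most once. Restricted to any line, $f$ is a one-dimensional threshold. So the labels of $q_1^-,c,q_1^+$ (in order along $\ell_1$) are monotone, and at least one far point $q_1^{s}$ has label $f(c)$; likewise some $q_2^{t}$ on $\ell_2$ does. Then $c,q_1^{s},q_2^{t}$ share a label, so $\mathrm{conv}(c,q_1^{s},q_2^{t})$ is monochromatic and certifies every point of $P$ in the closed quadrant spanned by those two rays. The learner reads off $(s,t)$ from the five answers without knowing $L$. The cases you worried about need no special treatment: if $c\in L$ then $f(c)=1$ and you take the positively labeled far points, and if $L=\ell_i$ both far points on $\ell_i$ qualify. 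You can therefore drop the fallback, which would not work anyway, since locating where $L$ meets a single line does not determine the labels of $P$.

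Three details need care. First, the far points must be far enough that the triangle actually covers $P\cap Q$. Lying outside $\mathrm{conv}(P)$ is not sufficient; distance of order $\operatorname{diam}(P)/\sin\angle(\ell_1,\ell_2)$ from $c$ is. Second, the in-square variant fails as written, because a corner of $[0,1]^2$ inside $Q$ can lie outside $\mathrm{conv}(c,q_1,q_2)$. You can repair it by also querying the four corners and certifying the polygon $Q\cap[0,1]^2$. Labels along $\partial[0,1]^2$ switch at most twice, so at least two of the four boundary arcs between consecutive $q$'s carry constant labels at their queried points. By the monotonicity above, at most one of these arcs can disagree with $f(c)$. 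The paper's proof also tacitly queries outside $[0,1]^2$, at the corners of a square centered at the centerpoint. Third, your rounds adaptively shrink the set of undetermined points rather than halving $|V_S|$ with a fixed $U$ for every $V$. So you bound $\mathrm{AIC}$ directly rather than $\mathrm{SHD}$, which is all the argument needs.

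Compared with the paper, both proofs reduce to exact recovery on an $\epsilon$-net and then, each round, certify a constant fraction of the remaining pool with $O(1)$ queries via a monochromatic convex hull around a central point. The paper's route goes as follows.
\begin{itemize}
\item It takes a centerpoint $c$, so every closed halfplane through $c$ holds at least $m/3$ points.
\item It queries the corners of a square centered at $c$; three agreeing corners span a half-square through $c$.
\item Otherwise it queries $c$ and the edge midpoints $(\pm1,0)$, and in the last case certifies the hull of the $0$-labeled queries, which contains one quadrant.
\item It then argues separately, via a change of axes that must in effect be fixed before the corner queries, that every quadrant around $c$ holds at least $m/6$ points.
\end{itemize}
This gives $7$ queries for $m/12$ points per round. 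Your ham-sandwich quartering supplies the quadrant mass directly, and the monotonicity observation replaces the case analysis, giving $5$ queries for about $|P|/4$ points with one uniform argument. It also makes the certification step dimension-free: query $c$ and one far point on each of the $2d$ edge rays of the orthants cut out by $d$ hyperplanes in general position. Extending your route to $\mathbb{R}^d$ therefore reduces to splitting the pool by $d$ hyperplanes into $2^d$ orthants of constant mass. Hadwiger's equipartition theorem provides this for $d=3$, but it fails in general for $d\ge 5$.
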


    \textbf{Proof Sketch.} The proof proceeds in three steps. First, we reduce the problem to exactly recovering the labels on a finite set $S$. Second, we use the centerpoint theorem~\citep{rado1946theorem} to identify a suitable coordinate system in which each quadrant contains a constant fraction of the points in $S$. Finally, we design an adaptive querying strategy that, using only a constant number of membership queries, certifies the labels of a constant fraction of the remaining points. Iterating this procedure leads to logarithmic query complexity. The complete proof is included in Appendix~\ref{sec:app_proof}.

    The proof relies on two geometric properties specific to the plane. First, the centerpoint theorem guarantees the existence of a ``center point'' such that every halfspace contains it also contains a constant fraction of the data. Second, the 2D geometry allows us to choose coordinate axes so that the convex hulls generated by queried points eliminate a constant fraction of the remaining uncertainty.

    While our argument operates entirely in the primal space, it remains unclear whether it can be extended to higher dimensions, where constructing informative convex hulls becomes considerably more challenging. We leave this question as an interesting direction for future work.

\section{Conclusions and Further Discussions} \label{sec:con}

In this work, we showed that active learning with membership query synthesis exhibits fundamentally different behavior from the pool-based setting. In particular, classical complexity measures such as the teaching dimension fail to characterize when exponentially fast learning is possible. Motivated by this gap, we conjectured that the ability to efficiently halve the version space remains the key structural property underlying exponential convergence, and proposed the synthesized halving dimension as a candidate complexity measure.

Several important questions remain open. Foremost is resolving Conjecture~\ref{conj:halve}: does logarithmic $\utd$ imply constant $\cd$? A positive answer would establish efficient halving as the fundamental mechanism underlying exponential active learning, while a negative answer would reveal a new learning principle unique to synthesized queries. Another important direction is to extend our analysis beyond the realizable setting and understand the role of synthesized queries in the presence of label noise. 

\section*{Acknowledgements}
The authors would like to thank Haifeng Xu and Idan Attias for helpful discussions.

\section*{AI Disclosure}
    The authors used ChatGPT for language editing to improve the clarity and readability of the manuscript. ChatGPT was also used to help inspire the development of \Autoref{thm:atd, thm:halve_near}. All other results and proofs were developed independently by the authors.

\newpage
\bibliography{ref}
\bibliographystyle{plainnat}

\newpage
\appendix

\section{Additional Figures}\label{sec:app_fig}
\begin{figure}[tbh]
    \centering
    \includegraphics[width=\linewidth]{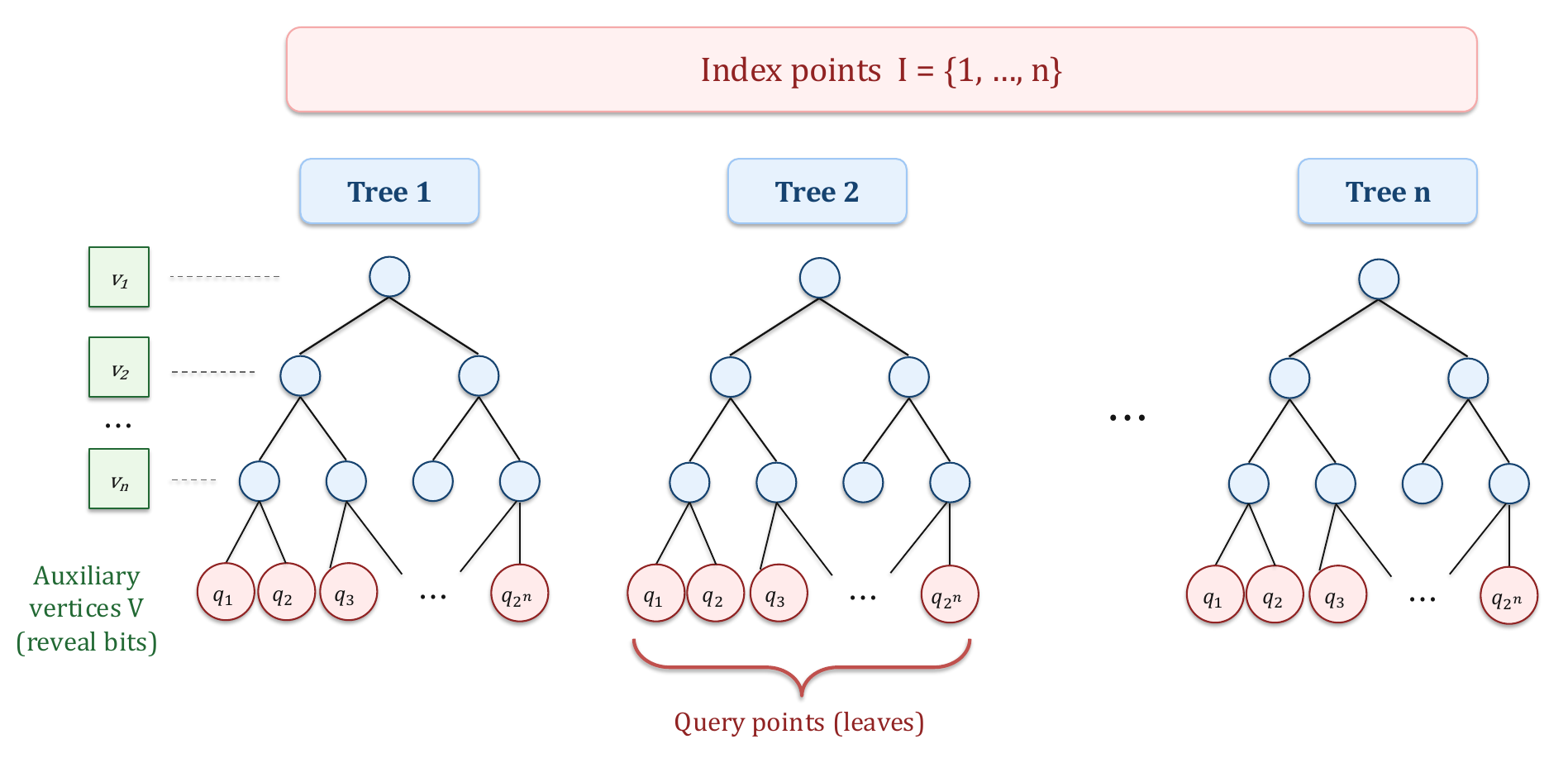}
    \caption{Illustration of the finite concept class. Each index $i\in\mathcal I$ corresponds to a balanced tree with $2^n$ leaves $\mathcal Q^i$. A hypothesis $f_{i^*,j^*}$ selects one tree $i^*$ and one positive leaf $q_{j^*}^{i^*}$, while the auxiliary vertices $v_1^{i^*},\ldots,v_n^{i^*}$ encode the $n$ bits of the leaf index $j^*$.}
    \label{fig:tree}
\end{figure}

\section{Missing Proofs}\label{sec:app_proof}

\textbf{Notations. }
We denote two sequences $a_n$ and $b_n$ at the same order as $a_n = \Theta(b_n)$, namely, $C_1 \leq \lim_{n \to \infty} a_n/b_n \leq C_2$ for some constants $C_1, C_2$. 
We also write $a_n = O(b_n)$ if $\lim_{n \to \infty} a_n/b_n \leq C$, and $a_n = \Omega(b_n)$ if $\lim_{n \to \infty} a_n/b_n \geq C$. 
Let $\ind$ denote the indicator function. 

\begin{proofof}{\Autoref{prop:atd_examples}.} We first consider the class of halfspaces.
    For $d=1$, it is straightforward to verify that $\atd(\mathcal{H}_{\lin}^1, m) = 2$ for $m\geq 2$. Next, we consider the case $d\geq 2$. 

        \textbf{Case 1:} $\vs_{S,h}(\mathcal{H}_{\lin}^d) =\emptyset$. In other words, $h$ is not realizable by a halfspace on $S$, meaning that the positive and negative points in $S$ are not linearly separable.
        By the hyperplane separation theorem, the convex hulls of the positive and negative examples intersect. By Carathéodory's theorem, this intersection can be witnessed by at most $d+2$ points. These $d+2$ points consist a specifying set $U$ as desired. 

        \textbf{Case 2:} $\vs_{S,h}(\mathcal{H}_{\lin}^d) \neq \emptyset$. Now, there exists a hyperplane $T: x^\T\beta + a = 0$ that achieves perfect separation on $S$ given $h$. 
        When $T$ does not intersect with $[0,1]^d$, then all points in $[0,1]^d$ receive the same label. As a result, a specifying set $U$ consists of all vertices of $[0,1]^d$. Otherwise, $T$ intersects with $[0,1]^d$. Then, one can always find a constant $\eta_1 \geq 0$ such that the halfspace $x^\T\beta + a + \eta_1 \geq 0$ includes all positive points in $S$. Similarly, let $\eta_2 < 0$ be a constant such that the halfspace $x^\T\beta + a + \eta_2 < 0$ includes all negative points in $S$. 
        A specifying set $U$ is intersection points between $[0,1]^d$ and two hyperplanes ($x^\T\beta + a + \eta_j = 0, j=1,2$). There are at most $d\cdot 2^{d}$ vertices because there are at most $d\cdot 2^{d-1}$ edges for $[0,1]^d$. 
        
    We next consider 1D intervals. For any $m \geq 0$, let $f_i = \ind_{x = i/m}, i=1,\dots, m$, $h=0$, and $\mu$ be a uniform distribution on $S \defeq \{i/m, i=1,\dots, m\}$. For any $U$ such that $\abs{U}<m-1$, it is clear that at least two points in $S$ are not included in $U$, therefore $\abs{(V_{U, h})_{\mathcal{C}} \cap \mathcal{C}/S} \geq 2$. It indicates that $\atd(\mathcal{C}, m) \geq m-1$. We thus complete the proof.

\end{proofof}

\begin{proofof}{\Autoref{thm:atd}.} \textbf{Construction of function class.} We partition the positive integers into consecutive blocks $B_k$, $k\geq2$, of size $L_k=k2^k$. Within $B_k$, assign bump numbers $1,\ldots,k$ cyclically, repeated $2^k$ times, and denote $A_{k,i}$ for the indices assigned bump $i$. Thus, each integer $n$ corresponds to a unique pair $(k,i)$. 

We associate each function $f_n$ in $\mathcal{C}$ with a bump function and a unique base height. For each pair $(k,i)$, $i=1,\dots,k, k=1,2,\dots$, choose an interval $J_{k,i}=(a_{k,i},b_{k,i})$, with all their closures pairwise disjoint and contained in $(0,1)$. Let $c_{k,i}=(a_{k,i}+b_{k,i})/2$ be the midpoint of $J_{k,i}$, and let $\tau_{k,i}$ be the triangular bump of height $1/2$ at $c_{k,i}$, zero outside $J_{k,i}$. For $n\in A_{k,i}$ define
\[
 \beta_n=\frac14-\frac1{8n},\qquad g_n(x)=\beta_n+\tau_{k,i}(x),
 \qquad f_n(x,y)=\ind\{y\leq g_n(x)\}, x,y\in(0,1).
\]
Set $\mathcal{C}=\{f_n:n\geq1\}$. Namely, our function class consists of epigraph of boundary function $g_n(x)$, which is a bump function with baseline height $\beta_n$. The four graph pieces are flat, rising, falling, flat. We demonstrate $f_{n}$ with $n=54$ in \Autoref{fig:bump}

\begin{center}
\begin{tikzpicture}[x=12cm,y=4.0cm,font=\small]
\pgfmathsetmacro{\base}{.25-1/(8*54)}
\fill[bumptwo!13] (0,0)--(0,\base)--(.318,\base)--(.39,{\base+.5})--(.462,\base)--(1,\base)--(1,0)--cycle;
\draw[grid] (0,0) rectangle (1,1);
\draw[grid,dashed] (0,.5)--(1,.5);
\draw[bumptwo,very thick] (0,\base)--(.318,\base)--(.39,{\base+.5})--(.462,\base)--(1,\base);
\foreach \xx/\cc/\ii/\lab in {.17/bumpone/1/0,.39/bumptwo/2/1,.61/bumpthree/3/0,.83/bumpfour/4/0}{
  \ifnum\ii=2 \filldraw[fill=\cc,draw=\cc] (\xx,.5) circle[radius=2.5pt];
  \else \filldraw[fill=white,draw=\cc,thick] (\xx,.5) circle[radius=2.5pt];\fi
  \node[above=4pt] at (\xx,.5) {$\lab$};
  \node[below=5pt] at (\xx,0) {$c_{4,\ii}$};
}
\foreach \yy/\lab in {0/0,.25/\frac14,.5/\frac12,1/1}
 \node[left=5pt] at (0,\yy) {$\lab$};
\node[rotate=90] at (-.08,.5) {Height $y$};
\node at (.5,-.20) {Horizontal position $x$};
\node[anchor=east,text=ink] at (.96,.91) {$n=54$: bump $2$};
\node[anchor=east,text=ink] at (.96,.12) {shaded region: label $1$};
\end{tikzpicture}
\captionof{figure}{One hypothesis in block $B_4=\{33,\ldots,96\}$. The four test points at height $1/2$ have labels $0,1,0,0$. The symbol $c_{4,i}$ is the horizontal coordinate of bump $i$'s peak. As $n$ increases, the bump cycles through its four positions while the baseline rises by tiny amounts.}\label{fig:bump}
\end{center}

\textbf{Finite $\atd$.} Briefly speaking, distinct baseline height supply the ordered markers as defined below. If an arbitrary $h$ imitates a finite index, two markers suffice. 
Define the midpoint markers
\[
 z_t=\left(0,\frac{\beta_t+\beta_{t+1}}2\right),\qquad
 f_n(z_t)=\ind\{n>t\},\quad t\geq1.
\]
A teacher knowing $n=54$ can query $z_{53}$ and $z_{54}$. Their labels $1,0$ force $n\geq54$ and $n\leq54$, respectively, as shown in \Autoref{fig:marker}. 

\begin{center}
\begin{tikzpicture}[x=1cm,y=.58cm,font=\small]
\draw[grid] (0,-.3)--(0,4.4);
\foreach \yy/\nn in {0/53,4/55}{
 \draw[grid] (0,\yy)--(8,\yy);
 \node[above right] at (0,\yy) {$\beta_{\nn}$};
}
\draw[bumptwo,very thick] (0,2)--(8,2);
\node[above right] at (0,2) {$\beta_{54}$: true baseline};
\foreach \yy/\tt/\ans in {1/53/1,3/54/0}{
 \draw[grid,dashed] (0,\yy)--(8,\yy);
 \node[left=7pt] at (0,\yy) {$z_{\tt}$: label $\ans$};
}
\filldraw[bumptwo] (0,1) circle[radius=2.5pt];
\filldraw[draw=bumptwo,fill=white,thick] (0,3) circle[radius=2.5pt];
\end{tikzpicture}
\captionof{figure}{At $x=0$, two marker queries bracket the true baseline. Vertical gaps are enlarged for visibility.}\label{fig:marker}
\end{center}

Fix $S$ and $h \in 2^{\mathcal{X}}$, there are exactly three cases for its infinite marker sequence:
\begin{enumerate}
\item \textbf{A zero followed by a one.} If $h(z_r)=0$ and $h(z_s)=1$ for $r<s$, no index can satisfy both $n\leq r$ and $n>s$. These two queries leave no hypothesis, so $\atd(h,\mathcal{C},m)=2$.
\item \textbf{A finite first zero.} Otherwise, if the first zero is at $n$, the answers at $z_{n-1},z_n$ leave precisely $f_n$. For $n=1$, $z_1$ alone suffices. This is valid even if $h$ disagrees with $f_n$ elsewhere: a singleton has only one restriction to $S$.
\item \textbf{All marker answers are one.} Every fixed $x$ lies in at most one bump interval, and that bump is used for only finitely many indices. After those indices, $f_n(x,y)=\ind\{y\leq\beta_n\}$, which converges to $\ind\{y<1/4\}$. Consequently, for some $T$, \emph{all} $n>T$ agree on the finite set $S$. Querying $z_T$, whose $h$-label is $1$, leaves exactly this tail and hence one restriction to $S$.
\end{enumerate}
The above discussion proves that $\atd(\mathcal{C},m)\leq 2$. 

\textbf{Learning is still hard.} 
We consider the following unlabeled points 
\[
 S_k=\{(c_{k,i},1/2):1\leq i\leq k\}.
\]
For $n\in B_k$, exactly one point in $S_k$ is positive, and its coordinate identifies the bump number. Test points illustrated in \Autoref{fig:bump} represent $S_k$. 

Now, we prove the difficulty in identifying the bump number of $f_n$. Let $D_k$ be a uniform distribution on $S_k$. So an algorithm cannot label all points correctly has error at least $1/k$. 
We introduce two types of auxiliary queries:
\[
 Q_t(n)=\ind\{n>t\},\qquad P_{b,i}(n)=\ind\{n\in A_{b,i}\}.
\]
The task is equivalent to correctly identify the bump number of $n$, given that $n$ is uniform on $B_k$.

For general $k$, there are $k2^k$ indices in $B_k$. Testing bumps individually takes order $k$ queries; binary-searching the index also takes order $\log(k2^k)=\Theta(k)$. Lemma~\ref{lem:auxiliary} shows that mixing the two query types cannot beat order $k$ either. 

Furthermore, every query $(x,y)$ can be simulated with at most two auxiliary queries. For a query $(x,y)$, at most one bump can be nonzero at $x$. Query its $P_{b,i}$, if needed, to determine whether it is active. Then compare $\beta_n$ with $y$ minus the now-known bump value. Since $\beta_n$ increases strictly, this comparison is a $Q_t$ query or a constant. Thus, every query can be recovered by at most two auxiliary queries.

Combining the above two observations, $O(k)$ queries are necessary to achieve an error below $1/k$, which completes the proof.  
\end{proofof}

        \newcommand{\disagr}{\texttt{DIS}}
    \begin{proofof}{Theorem~\ref{thm:2d}.}
        We repeatedly use the following elementary property of linear classifiers: the convex hull of positively labeled points must also be labeled positive, and similarly for negatively labeled points. 
        
        \textbf{Step 1: Reduction to a finite set.} 
        Let $\disagr(f,g) \defeq \{x \in \mathcal{X}: f(x) \neq g(x)\}$ be the disagreement set of two concepts $f,g \in\mathcal{C}$. 
        We define the following pseudo-metric on $\mathcal{C}$: $D(f,g) \defeq \mu(\disagr(f,g)).$
        By Lemma~\ref{lemma:net}, we can sample an $\epsilon$-net $S$ of $\{D(f,g), f,g \in\mathcal{C}\}$ with sample size $\abs{S} \defeq m = 3c'\epsilon^{-1}\ln(1/\epsilon)$. Namely, if $f(x) = g(x)$ for all $x \in S$, we have $D(f,g) \leq \epsilon$.
        
        Therefore, by choosing $\delta=\epsilon$, it suffices to exactly recover the labels on $S$ with $O(\ln(m))$ queries. In particular, we construct an algorithm that, in each round, determines the labels of at least a constant fraction of the remaining unlabeled points using only a constant number of queries.

        \textbf{Step 2: Querying the corners.} 
        The query procedure is illustrated in \Autoref{fig:alg}.
        Let $c$ be a centerpoint of $S$ (Lemma~\ref{lemma:center}). By translating and rescaling the space, we may assume without loss of generality (WLOG) that $c$ is the origin and $S \in [-1,1]^2$.
        Lemma~\ref{lemma:center} indicates that any halfspace containing $c$ contains at least $\lceil m/3 \rceil$ points in $S$. 
        
        We first query the four corners of the square $[-1, 1]^2$. If three or more corners share the same label, then the entire convex hull of these corners must receive the same label. Since this convex hull contains the centerpoint $c$, we eliminate $m/3$ points of $S$. 

        The only remaining possibility is that the two upper corners have one label and the two lower corners have the other. WLOG, assume that the upper corners have label one, while the lower concerns are zeros. We then proceed to the next step.

        \textbf{Step 3: Querying the center and edge midpoints.} We query the label of $c$, which WLOG assumed to be zero. Next, we query the two edge midpoints $(-1, 0)$ and $(1, 0)$. If both of them are zeros, then again we eliminate $\lceil m/3 \rceil$ points. 

        In the case where the two midpoints disagree, WLOG, we assume $(-1, 0)$ has label one and $(1, 0)$ is zero. Note that if the convex hull of zeros $H_0$ have more than $\rho m$ points, we are done. 

        It remains to show that $\abs{S \cap H_0} \geq \rho m$ for a constant $\rho$. 
        In fact, after an appropriate rotation of coordinates, Lemma~\ref{lemma:center} guarantees that each quadrant contains at least $m/6$ points.
        To see it, suppose one of the quadrants, say $H_1$, has less than $m/6$ points. Since every halfspace containing the centerpoint contains at least $m/3$ points, we may rotate the coordinate system and choose a second line $v$ through the centerpoint so that the sector $H_1\cap H_2$ contains exactly $m/6$ points. Taking $y$-axis and $v$ as the new coordinate axes yields a partition of the plane into four quadrants, each containing at least $m/6$ points. In the special case where more than $m/6$ points exactly lay on the $y$-axis, one can efficiently eliminate half of them by bisection method.


        As a result, in each round, Step 2 and 3 eliminate at least $m/12$ points, at the cost of querying at most $7$ points. We thus complete the proof.

        
        
    \end{proofof}

    \begin{figure}
    \begin{minipage}[c]{0.48\textwidth}
        \centering
        \includegraphics[width=\linewidth]{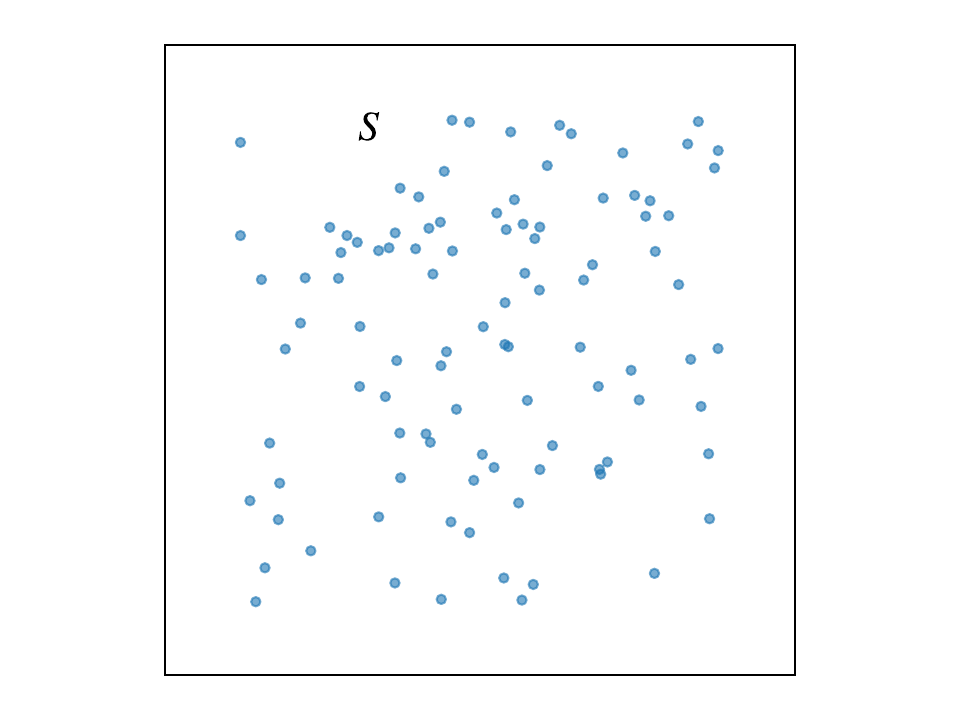}
        {Step 1: Find an $\epsilon$-net $S$}
    \end{minipage}
    \hfill
    \begin{minipage}[c]{0.48\textwidth}
        \centering
        \includegraphics[width=\linewidth]{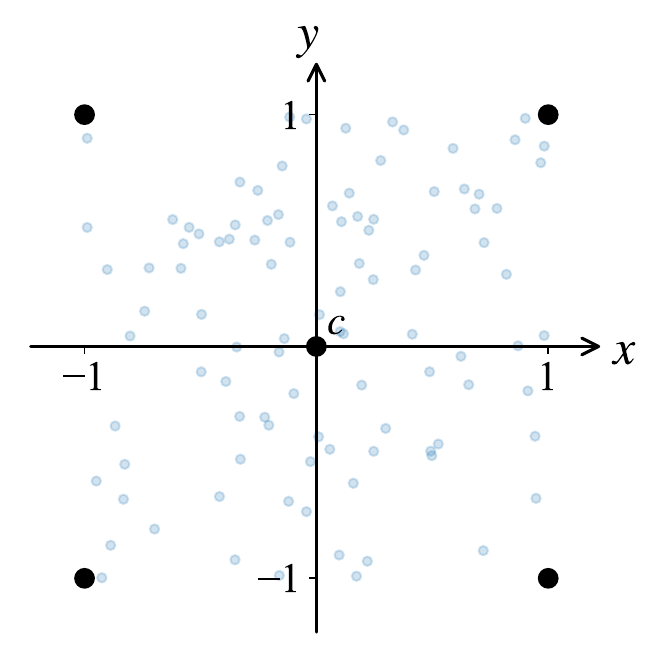}
        {Step 2: Query the corners}
    \end{minipage}
    \vspace{0.5cm}
    \begin{minipage}[c]{0.48\textwidth}
        \centering
        \includegraphics[width=\linewidth]{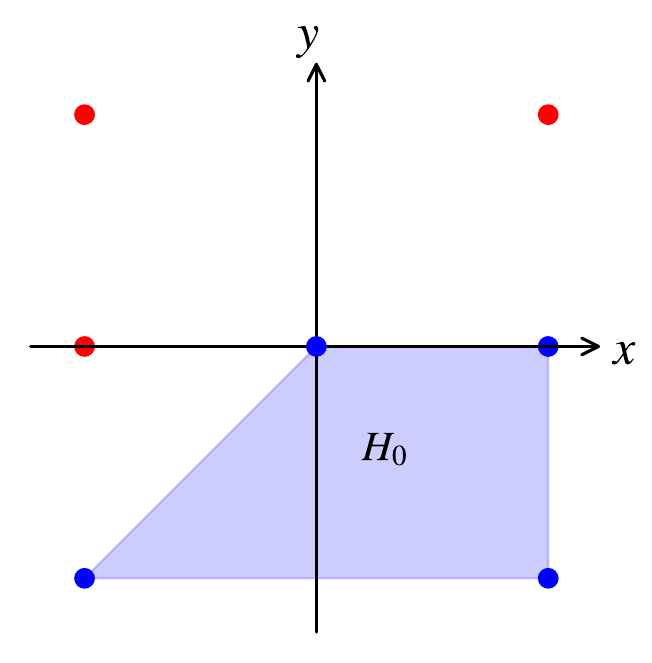}
        {Step 3: Query the centerpoint and two points on the edge}
    \end{minipage}
    \hfill
    \begin{minipage}[c]{0.48\textwidth}
        \centering
        \includegraphics[width=\linewidth]{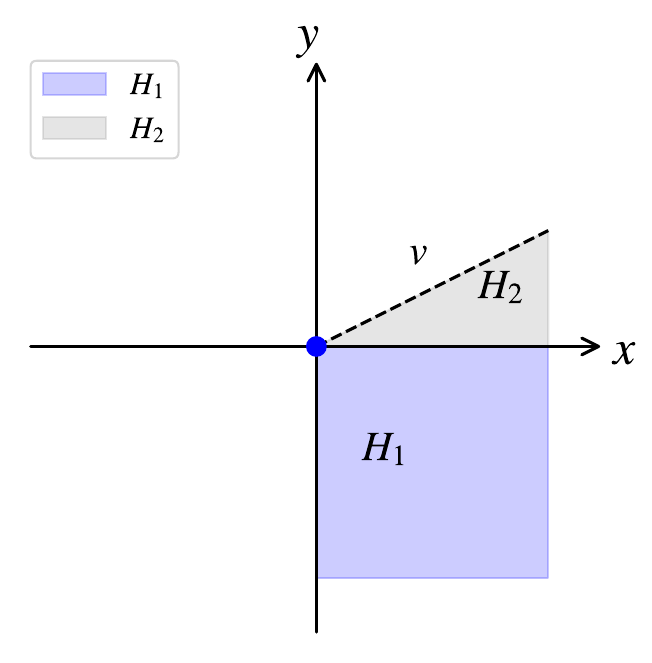}
        {There exists a transformation ensures that every quadrant has sufficient many points.}
    \end{minipage}
    \caption{Demonstration of the proposed membership query strategy that eliminates a fraction of unlabeled points with constant number of queries.}
    \label{fig:alg}
    \end{figure}

    \begin{proofof}{\Autoref{thm:halve_near}.}
        For any fixed $S$ and $V$, let $N=\abs{V_S}$, and $D$ be the minimum depth of an adaptive binary decision tree induced by the membership query strategy that exactly identifies the labeling on $S$ for every $f \in V$. Further denote $r= \max\{0, D-\lfloor \log(N/2) \rfloor\}$. 

        Let $U$ be the union of all possible queries appearing in the first $r$-level of this tree. We have $\abs{U} \leq \sum_{i=1}^r 2^{i-1} = 2^r-1$. 
        Now, for any labeling on $S$, since it will be revealed after $D$ queries, the subtree of any node in the $r$-th level has depth at most $D-r\leq \lfloor \log(N/2) \rfloor$. Therefore, the remaining possible labelings after querying the first $r$ queries are at most $2^{D-r} \leq N/2$. In other words, $2^r-1$ queries is sufficient to halve $V_S$.

        Note that any $N$ distinct labelings can be separated by at most $N-1$ points of $S$. We have $D \leq \utd(V,N) \leq \log(N)+B$ and $r\leq B+2$, since $\utd(\mathcal{C}, m) \leq \log(m)+B$ for every $m$. 
        We conclude the proof as $V$ and $S$ can be chosen arbitrarily and $r$ is independent of them. 
    \end{proofof}

\section{Technical Lemmas}

    \begin{lemma}[\protect{\citet[Lemma 21]{hanneke2015minimax}}]\label{lemma:net}
        Suppose $\mathcal{C}$ is a class with VC dimension $d$ and an oracle sampler is available. 
        There exists a universal constant $c' \geq 1$ such that, for any distribution $\mu$ on $\mathcal{X}$ and $\delta > 0$, one can sample a set $S=\{x_1,\dots,x_m\}$ that is an $\epsilon$-net of $\mu$ for $\{\dis(\{h,g\}): h,g \in \mathcal{C}\}$ with probability at least $1-\delta$. Moreover, $\abs{S} = \lceil c'd\epsilon^{-1}\ln(1/\epsilon) \rceil$.
    \end{lemma}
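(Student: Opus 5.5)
The plan is to derive the lemma from the classical $\epsilon$-net theorem of Haussler and Welzl. Rather than bound the VC dimension of the disagreement class, I will work directly with its growth function, which is easy to control.

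\textbf{Step 1: bound the growth function.} Let $\mathcal{D}=\{\dis(\{h,g\}): h,g\in\mathcal{C}\}$. Note that $\dis(\{h,g\})=\{x: h(x)\neq g(x)\}$. On any $m$ points, the trace of $\dis(\{h,g\})$ is determined by the pair of traces of $h$ and $g$. Hence the growth function satisfies $\Pi_{\mathcal{D}}(m)\le \Pi_{\mathcal{C}}(m)^2$. By Sauer's lemma, $\Pi_{\mathcal{C}}(m)\le (em/d)^d$, so
\[
\Pi_{\mathcal{D}}(m)\le (em/d)^{2d}.
\]

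\textbf{Step 2: symmetrization.} Draw $S$ of size $m$ i.i.d.\ from $\mu$ using the sampling oracle. The bad event is $E$: some $R\in\mathcal{D}$ has $\mu(R)\ge\epsilon$ but $R\cap S=\emptyset$. Draw a ghost sample $S'$ of size $m$. If $R$ witnesses $E$, then by a Chernoff bound $R$ catches at least $\epsilon m/2$ points of $S'$ with probability at least $1/2$, once $m\ge 8/\epsilon$. This gives $\P(E)\le 2\,\P(E')$, where $E'$ is the event that some $R$ misses $S$ but hits at least $\epsilon m/2$ points of $S'$.

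Condition on the multiset $S\cup S'$ and randomly split it into two halves. Only $\Pi_{\mathcal{D}}(2m)$ distinct traces $R\cap(S\cup S')$ exist. For a fixed trace with at least $\epsilon m/2$ points, the probability that all of them land in $S'$ is at most $2^{-\epsilon m/2}$. A union bound then yields
\[
\P(E)\le 2\,(2em/d)^{2d}\,2^{-\epsilon m/2}.
\]

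\textbf{Step 3: solve for $m$.} I need $2(2em/d)^{2d}2^{-\epsilon m/2}\le\delta$. This holds for
\[
m\ge \frac{c}{\epsilon}\Bigl(d\ln\frac{1}{\epsilon}+\ln\frac{1}{\delta}\Bigr)
\]
with a universal constant $c$, by the standard inequality $\ln m\le \alpha m-\ln\alpha-1$ applied with $\alpha\asymp\epsilon/d$. In the regime where $\delta$ is treated as a fixed constant, or $\delta=\epsilon$ as in the proof of Theorem~\ref{thm:2d}, the term $\ln(1/\delta)$ is absorbed into $d\ln(1/\epsilon)$, assuming $\epsilon<1/2$ say. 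This gives $|S|=\lceil c'd\epsilon^{-1}\ln(1/\epsilon)\rceil$.

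Finally, the $\epsilon$-net property is what the proof of Theorem~\ref{thm:2d} uses: if $h=g$ on $S$, then $\dis(\{h,g\})\cap S=\emptyset$, so $\mu(\dis(\{h,g\}))<\epsilon$.

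The main obstacle is Step 3, namely getting the bound in the stated form without a separate $\ln(1/\delta)$ term. I would make the dependence on $\delta$ explicit and check that the absorption is legitimate in the paper's application. Step 2 is the standard double-sampling argument and needs only care with the factor $1/2$ and the condition $m\ge 8/\epsilon$. Step 1 is routine.
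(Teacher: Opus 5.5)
Your proposal is correct. The paper gives no argument of its own here: it simply imports the result from Hanneke and Yang, where it is the classical $\epsilon$-net theorem (Haussler--Welzl, Blumer et al.) applied to the disagreement class, whose complexity is of order $d$.

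Your proposal writes out that standard double-sampling proof from scratch. The one structural difference is that you bound the growth function directly, $\Pi_{\mathcal D}(m)\le\Pi_{\mathcal C}(m)^2\le(em/d)^{2d}$, rather than first bounding the VC dimension of $\mathcal D$. This keeps the argument self-contained, while the citation is shorter.

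The steps check out:
\begin{itemize}
\item the Chernoff step needs only $\epsilon m\ge 8$;
\item the random split costs $2^{-\epsilon m/2}$ per trace;
\item solving $2(2em/d)^{2d}2^{-\epsilon m/2}\le\delta$ gives $m\ge C\epsilon^{-1}(d\ln(1/\epsilon)+\ln(1/\delta))$ for a universal $C$.
\end{itemize}

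What your version adds over the citation is that it makes $\delta$ explicit. You are right that a $\delta$-free size cannot hold for an i.i.d.\ sample and every $\delta>0$. For example, for thresholds under the uniform measure, the sample misses the disagreement region $(1-\epsilon,1]$ of two thresholds with probability $(1-\epsilon)^m>0$. The lemma as stated is legitimate in the regime $\delta=\epsilon$, which is exactly how the proof of Theorem~\ref{thm:2d} uses it.

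Two small refinements.
\begin{itemize}
\item For a fixed constant $\delta$, absorbing $\ln(1/\delta)$ makes $c'$ scale like $1+\ln(1/\delta)$. So $c'$ is universal only when $\ln(1/\delta)=O(d\ln(1/\epsilon))$.
\item If you want the literal statement for every $\delta$, use the fact that unlabeled draws are free. First draw $O(\epsilon^{-1}(d\ln(1/\epsilon)+\ln(1/\delta)))$ points. By a relative-deviation bound, with probability $1-\delta$ every $R\in\mathcal D$ with $\mu(R)\ge\epsilon$ has empirical mass at least $\epsilon/2$ on them. Then extract a subset of size $O(d\epsilon^{-1}\ln(1/\epsilon))$ that is an $\epsilon/2$-net for the empirical measure. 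Such a subset exists by your own bound applied with constant failure probability.
\end{itemize}
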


    \begin{lemma}[Centerpoint Theorem\protect{~\citep{rado1946theorem}}]\label{lemma:center}
        Let $S \subset \mathbb{R}^d$ be a finite set of $m$ points. Then there exists a point $c \in \mathbb{R}^d$ such that every closed halfspace containing $c$ contains at least $\lceil m/(d+1) \rceil$ points of $S$.
The point $c$ is called a centerpoint of $S$.
    \end{lemma}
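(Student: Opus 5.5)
The proof is the classical Helly-type argument. Throughout, $S\subset\real^d$ is a finite set of $m$ points. The idea is to build a finite family of convex sets, each of which must contain any candidate centerpoint. We then check that every $d+1$ of these sets share a point, and Helly's theorem places a single point $c$ in all of them.

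\textbf{Step 1: the family.} Let $\mathcal{K}$ be the family of sets $K_H \defeq \mathrm{conv}(S\cap H)$, where $H$ ranges over closed halfspaces with $\abs{S\cap H} > dm/(d+1)$. Each $K_H$ is a nonempty compact convex set. Since $S\cap H$ is a subset of $S$, the family $\mathcal{K}$ has only finitely many distinct members, at most $2^m$. This lets us apply the finite version of Helly's theorem directly, with no compactness argument.

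\textbf{Step 2: the Helly condition.} Take any $H_1,\dots,H_{d+1}$ in the index set. Each complement $\real^d\setminus H_j$ contains fewer than $m - dm/(d+1) = m/(d+1)$ points of $S$. Their union therefore contains fewer than $(d+1)\cdot m/(d+1) = m$ points. Hence some $x\in S$ lies in every $H_j$, so $x\in\bigcap_j K_{H_j}$. Helly's theorem in $\real^d$ then gives a point $c$ with $c\in\bigcap_{K\in\mathcal{K}}K$. The family is nonempty: $H=\real^d$ is not a halfspace, but any closed halfspace containing all of $S$ is in the index set.

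\textbf{Step 3: the centerpoint property.} Suppose, for contradiction, that some closed halfspace $G$ contains $c$ but fewer than $\lceil m/(d+1)\rceil$ points of $S$. Because the count is an integer, this means $\abs{S\cap G}<m/(d+1)$. Write $G=\{x: a^\T x\geq b\}$. Since $S$ is finite, we can choose $b'<b$ so small a shift that the open halfspace $G'=\{x: a^\T x> b'\}$ contains exactly the same points of $S$ as $G$; note $G\subseteq G'$. Its complement $H=\{x:a^\T x\le b'\}$ is a closed halfspace with $\abs{S\cap H}>m-m/(d+1)=dm/(d+1)$, so $K_H\in\mathcal{K}$. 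Then $c\in K_H\subseteq H$, while also $c\in G\subseteq G'=\real^d\setminus H$. This is a contradiction.

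\textbf{Expected obstacle.} The only delicate point is the bookkeeping between closed and open halfspaces and the strict versus non-strict inequalities involving $\lceil m/(d+1)\rceil$. Step 3 handles this by shifting the boundary slightly, which finiteness of $S$ permits. Everything else is a direct application of Helly's theorem, which I take as standard.
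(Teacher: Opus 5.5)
Your proof is correct and complete. It is the standard Helly-theorem argument for the centerpoint theorem, and you handle the delicate points properly: there are only finitely many distinct sets $K_H$; the integer rounding gives $\abs{S\cap G}<\lceil m/(d+1)\rceil \Rightarrow \abs{S\cap G}<m/(d+1)$; and the shift to the open halfspace $G'$ makes its complement a closed halfspace in the index set. The paper gives no proof of its own and simply cites \citet{rado1946theorem}, so your argument supplies the classical proof behind that citation.
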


    \begin{lemma}\label{lem:auxiliary}
    Consider the function class in the proof of \Autoref{thm:atd}. 
Fix $k\geq 2$, and let $N$ be uniformly distributed over the block $B_k$. Let $\rho_k(N)\in[k]$ denote the bump number associated with $N$. Suppose an algorithm may adaptively make the following two types of auxiliary queries:
\begin{align*}
    Q_t(N) = \ind\{N>t\},\quad
    P_{b,i}(N) = \ind\{N\in A_{b,i}\}.
\end{align*}
Then any possibly randomized algorithm making at most $t$ queries and outputting
an estimate $\widehat \rho$ of $\rho_k(N)$ satisfies
\begin{align*}
    \P\bigl(\widehat \rho=\rho_k(N)\bigr)
    \leq
    \frac{t+1}{k}
    +
    \frac{2^{t+1}-1}{k2^k}.
\end{align*}
Consequently, for any $k\geq8$ and $t\leq k/4$, the estimation accuracy is less than $1/2$.
\end{lemma}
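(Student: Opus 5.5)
We need to prove Lemma~\ref{lem:auxiliary}: with $N$ uniform on $B_k$, any algorithm making at most $t$ queries of the types $Q_s$ and $P_{b,i}$ guesses $\rho_k(N)$ correctly with probability at most $\frac{t+1}{k}+\frac{2^{t+1}-1}{k2^k}$.

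\textbf{Step 1: reduce to deterministic algorithms.} A randomized algorithm is a mixture of deterministic ones, and its success probability is the corresponding average. So it suffices to bound a deterministic decision tree of depth $t$.

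\textbf{Step 2: simplify the queries.} Since $N\in B_k$ is known, every query $P_{b,i}$ with $b\neq k$ is identically zero and can be discarded. A query $Q_s$ is informative only when $s$ lies inside $B_k$, where it asks whether $N$ exceeds a threshold. So the algorithm is a depth-$t$ tree whose internal nodes are either threshold questions $N>s$ or membership questions $N\in A_{k,i}$ ("is the bump equal to $i$?").

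\textbf{Step 3: analyze a single leaf.} Fix a leaf. Along its root-to-leaf path, let $a$ be the number of $P$-queries answered "yes" and $p$ the number answered "no". Define $I_\ell$ to be the interval of $B_k$ cut out by the threshold answers on the path. The set of $N$ reaching the leaf is $I_\ell$ intersected with the bump constraints.
\begin{itemize}
\item If some $P$-answer is "yes", the bump is determined. But at most one root-to-leaf path can contain a "yes" answer to any given $P_{k,i}$ at that stage, and the corresponding leaves together carry at most the mass $1/k$ per such question. Summing over the at most $t$ positions where a "yes" can first occur gives a contribution of at most $t/k$. The idea is to charge each first "yes" to the query node where it happens, each such node carrying mass $\le 1/k$ of those reaching it.
\item If all $P$-answers on the path are "no", the leaf excludes a set $E_\ell$ of at most $t$ bumps. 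Because $I_\ell$ is an interval and bumps cycle with period $k$, each remaining bump occupies a $(1\pm k/|I_\ell|)$-balanced fraction of $I_\ell$. The best guess then succeeds on at most $\lceil |I_\ell|/k\rceil \le |I_\ell|/k+1$ indices.
\end{itemize}

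\textbf{Step 4: sum over leaves.} Summing $(|I_\ell|/k+1)/(k2^k)$ over the at most $2^t$ leaves, and more carefully over all tree nodes (which accounts for the $2^{t+1}-1$ term), bounds the contribution of the "all no" leaves by $\frac{1}{k}+\frac{2^{t+1}-1}{k2^k}$. This uses that the intervals partition $B_k$ along the threshold structure. Adding the "yes" contribution $t/k$ yields the claimed bound.

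\textbf{Step 5: the consequence.} For $k\ge8$ and $t\le k/4$, the first term is at most $\tfrac14+\tfrac1k\le \tfrac38$. The second term is at most $2^{k/4+1}/(k2^k)$, which is tiny. So the total is below $1/2$.

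\textbf{Main obstacle.} The hardest part is the bookkeeping when $P$-queries and $Q$-queries are mixed. A "no" answer removes one residue class, which breaks the uniformity used in the count. The cleanest approach I would try first is an induction on $t$ with the stronger hypothesis: for any interval $I$ and any set $E$ of excluded bumps, a depth-$t$ tree on $N$ uniform over $I\setminus\bigcup_{i\in E}A_{k,i}$ has success mass at most $(t+1)|I|/k+2^{t+1}-1$ counted in indices. The recursive step then either splits $I$ at a threshold, or, for a $P$-query, takes the "yes" branch with mass $\le |I|/k+1$ and the "no" branch by induction.
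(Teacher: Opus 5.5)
\textbf{Steps 3--4 contain a false claim.} The first bullet of Step 3 says that each node asking a $P$-query carries ``mass $\le 1/k$ of those reaching it''. From this you conclude that first ``yes'' answers contribute at most $1/k$ per depth, and $t/k$ in total. This fails, because a threshold can isolate a short interval in which one bump is overrepresented.

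For example, let the root ask $Q_s$ with $s=\min B_k$. Then ask $P_{k,\rho_k(s)}$ on the singleton side and $P_{k,i}$, for some other $i$, on the remaining side. The first-``yes'' mass at depth $1$ is $(2^k+1)/(k2^k)>1/k$, and relative to the singleton node the fraction is $1$.

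The correct per-depth statement is exactly what the paper proves:
\begin{itemize}
\item The active nodes at depth $j$ (all earlier $P$-answers ``no'') have pairwise disjoint threshold intervals, and there are at most $2^j$ of them.
\item Each such node yields at most $|I_\ell|/k+1$ positive indices.
\item So the first-``yes'' mass at depth $j$ is at most $1/k+2^j/(k2^k)$, and summing over depths gives $t/k+(2^t-1)/(k2^k)$.
\item The all-``no'' leaves contribute $1/k+2^t/(k2^k)$.
\end{itemize}
The term $2^{t+1}-1$ is the sum of these two rounding terms. It does not come from the all-``no'' leaves alone, as your Step 4 suggests.

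\textbf{Your induction is correct and proves the lemma on its own.} This is the argument you sketch under ``Main obstacle''. Let $W=I\setminus\bigcup_{i\in E}A_{k,i}$, and bound the number of correctly classified indices by $(t+1)|I|/k+2^{t+1}-1$. The cases check out:
\begin{itemize}
\item For $t=0$, a fixed guess is right on at most $|I|/k+1$ indices of $I$.
\item If the root is $Q_s$, it splits $I=I'\cup I''$, and the hypothesis gives $t|I|/k+2^{t+1}-2$.
\item If the root is $P_{k,i}$, the ``yes'' side gives at most $|I|/k+1$. The ``no'' side, with $E\cup\{i\}$, gives $t|I|/k+2^t-1$. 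The total is $(t+1)|I|/k+2^t$.
\item A root $P_{b,i}$ with $b\neq k$ is always answered $0$ and only wastes depth.
\end{itemize}
Taking $I=B_k$, $E=\emptyset$ and dividing by $k2^k$ gives exactly the stated bound.

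\textbf{How this differs from the paper.} The paper truncates branches at the first positive $P$-answer and counts level by level. That requires a separate argument that active intervals at each depth are disjoint, plus separate bookkeeping for the two rounding terms. Your recursion avoids both. The bound depends only on $|I|$, so excluded bumps are absorbed automatically, and it is additive under threshold splits. The same recursion also shows that $2^{t+1}-1$ can be improved to $2^t$.

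I would discard Steps 3--4 and present the induction as the proof. Your Step 5 is fine.
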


\begin{proofof}{Lemma~\ref{lem:auxiliary}.}
We first fix the internal randomness of the algorithm, so that the resulting query strategy is deterministic yet adaptive: at each node of its decision tree, its next query is a deterministic function of the query-answer history along the path to that node. In particular, fixing $R$ does not make the sequence of queries nonadaptive. 
We then prove the desired bound uniformly for every such deterministic adaptive decision tree. Averaging the resulting bound establishes the claim for randomized learners. 

Since $N\in B_k$, any query $P_{b,i}$ with $b\neq k$ always returns zero and provides no information. We may therefore restrict attention to queries $P_{k,i}$.

Consider the depth-$t$ decision tree induced by the algorithm. Whenever a $P_{k,i}$ query receives answer $1$, we terminate that branch and count it as a success, regardless of the subsequent behavior of the algorithm. This can only increase the probability of success. Hence, on every branch that remains active, all preceding $P$-queries have returned zero.

We next consider a path where $P$-queries are all negative before depth $j$. 
Since every $Q_t$ is a threshold query of the form $N>t$, the set of indices satisfying a fixed sequence of $Q$-answers is an integer interval. Therefore, at depth $j$, the intervals associated with different active nodes are pairwise disjoint, and there are at most $2^j$ of them. Denote these intervals by $I_1,\dots,I_r$, where $r\leq 2^j$. Then
\begin{align*}
    \sum_{\ell=1}^r |I_\ell|
    \leq \abs{B_k}=L_k=k2^k.
\end{align*}

By construction, the bump numbers $1,\dots,k$ occur cyclically within $B_k$. Consequently, for every integer interval $I\subseteq B_k$ and every $i\in[k]$,
\begin{align}\label{eq:cyclic-count}
    \bigl|\{n\in I:\rho_k(n)=i\}\bigr|
    \leq \frac{|I|}{k}+1.
\end{align}


We can now bound the probability that the algorithm encounters a positive $P$-answer. Consider depth $j<t$. At each active node associated with an interval $I_\ell$, the algorithm can ask at most one query $P_{k,i_\ell}$. By \Autoref{eq:cyclic-count}, at most $1+|I_\ell|/k$ indices in $I_\ell$ can produce a positive answer to this query. Therefore, the total number of indices that can produce their first positive $P$-answer at depth $j$ is at most
\begin{align*}
    \sum_{\ell=1}^r
    \left(\frac{|I_\ell|}{k}+1\right)
    &\leq
    \frac{L_k}{k}+2^j.
\end{align*}
Since $N$ is uniform on $B_k$, the probability of a first positive $P$-answer
at depth $j$ is at most
\begin{align*}
    \frac{1}{k}+\frac{2^j}{L_k}.
\end{align*}
Summing over $j=0,\dots,t-1$, the total probability of all branches terminated
by a positive $P$-answer is at most
\begin{align}\label{eq:positive-P}
    \sum_{j=0}^{t-1}
    \left(
        \frac{1}{k}+\frac{2^j}{L_k}
    \right)
    =
    \frac{t}{k}
    +
    \frac{2^t-1}{L_k}.
\end{align}

It remains to consider branches on which every $P$-query returns zero. There are at most $2^t$ terminal leaves. 
Each such leaf is associated with an integer interval $I_\ell\subseteq B_k$, and these intervals are pairwise disjoint. At a terminal leaf, the algorithm outputs a single bump number $\widehat\rho_\ell$. By \Autoref{eq:cyclic-count}, the number of indices in
$I_\ell$ for which this output is correct is at most $1+|I_\ell|/k$. 
Summing over all terminal leaves, the total number of indices on which the
algorithm can correctly identify the bump number without encountering a
positive $P$-answer is at most
\begin{align*}
    \sum_\ell
    \left(
        \frac{|I_\ell|}{k}+1
    \right)
    \leq
    \frac{L_k}{k}+2^t.
\end{align*}
Hence the corresponding success probability is at most
\begin{align}\label{eq:terminal-success}
    \frac{1}{k}+\frac{2^t}{L_k}.
\end{align}

Combining \Autoref{eq:positive-P, eq:terminal-success} yields
\begin{align*}
    \P\bigl(\widehat\rho=\rho_k(N)\bigr)
    \leq
    \frac{t}{k}
    +\frac{2^t-1}{L_k}
    +\frac{1}{k}
    +\frac{2^t}{L_k}
    =
    \frac{t+1}{k}
    +
    \frac{2^{t+1}-1}{L_k}.
\end{align*}
which completes the proof.

\end{proofof}

\section{Improper Learning Does not Help}\label{sec:app_proper}
This work focus on proper learning, where the output hypothesis $\hat{f}_n$ is chosen from the hypothesis class $\mathcal{C}$. In contrast, improper learning~\citep{pitt1988computational} allows the learner to choose an hypothesis outside $\mathcal{C}$. As an extreme example, the learner can construct any estimator in $\{0,1\}^\mathcal{X}$. 

At first glance, appears strictly more powerful, since it enlarges the set of admissible estimators. However, in a realizable setting with $\mathcal{C}$ being a VC class of binary classifiers, improper learning does not fundamentally improve the achievable convergence rate. In particular, any improper learner achieving exponential convergence can be converted into a proper learner with the same rate. 

To see this, let $S$ be the same set constructed in the proof of \Autoref{thm:2d}, and choose $\delta=\epsilon$. With probability at least $1-\delta$, $S$ is an $\epsilon$-net. Let $m \defeq \abs{S}$. Suppose an improper learner outputs an estimator $\tilde{f}_n$ and achieves an exponential convergence rate. Then, $\tilde{f}_n$ must correctly label every point in $S$ using $n=O(\ln(m))$ queries. Indeed, if $\tilde{f}_n$ misclassifies any point in $S$, the definition of an $\epsilon$-net implies that the induced error is at least of order $1/m$, contradicting the assumed exponential rate. Since we are in the realizable setting, the labeling of $S$ produced by $\tilde{f}_n$ must coincide with the restriction of some hypothesis $f \in\mathcal C$. Therefore, one may choose any $\hat{f}_n \in \mathcal{C}$ that agrees with $\tilde{f}_n$ on $S$. Since $S$ is an $\epsilon$-net, it implies that $\hat{f}_n$ also achieves the same exponential rate as $\tilde{f}_n$. Consequently, improper learning does not provide an essential advantage over proper learning in our setting. 


\section{Stochastic Query Algorithms} \label{sec:app_determin}
    One may wonder what happens if the query strategy is allowed to be stochastic. That is, the query at each round $i$ can be random. 
    Let us define $\utd$ with a stochastic version of query strategy. In particular, we allow $A$ to output random queries, i.e., $A(X_1,\dots,X_{i}, f(X_1), \dots, f(X_i))$ is a random variable on $\mathcal{X}$. 
    Correspondingly, for some $\eta \in (0,1)$.
    \begin{align*}
        \utd(m) \defeq \sup_{S \in \mathcal{X}^m} \min\biggl\{t \geq 0: \exists A \in \mathcal{A}_t, s.t. \  \sup_{f \in \mathcal{C}} \P( 
        \abs{\{\vs_{U,f}(\mathcal{C}) \}_S}
        >  1) < \eta\biggr\} .
    \end{align*}

    Allowing randomized query strategies potentially enlarges the power of the learner. In particular, the randomized and deterministic versions of $\utd$  need not coincide.
    
    For example, consider a singleton classifiers on $S=\{x_1,\dots,x_m\}$, $f_i(x_j) = \ind_{i=j}$ for $i,j=1,\dots,m$. A deterministic strategy that identifies the labeling of $s$ for every target must make at least $m-1$ queries in the worst case. Indeed, after observing $m-2$ negative labels, two singleton hypotheses remain indistinguishable.
    In contrast, a randomized strategy may query $m-2$ distinct points chosen uniformly at random.
    For any fixed target $f_i$, the labeling of $S$ is uniquely identified whenever $x_i$ is queried. Hence its failure probability is $2/m$. So, for any fixed $\eta$, such a randomize strategy achieves $\utd(m)=m-2$ for sufficiently large $m$.

    This separation, however, does not establish an asymptotic advantage in the regime of primary interest to this paper. In particular, it remains possible that 
    \begin{align*}
        \utd_{\mathrm{rand}}(m,\eta)=O(\log m)
\quad\Longrightarrow\quad
\utd_{\mathrm{det}}(m)=O(\log m).
    \end{align*}
    Determining whether randomization can fundamentally change exponential learnability remains an interesting open question.

    Allowing randomized strategies would also require a probabilistic analogue of the synthesis halving dimension, in which a constant-factor reduction of the candidate space is guaranteed only with high probability. Such a quantity would depend on both the randomized strategy and a failure parameter, and would therefore lose some of the purely structural character that motivates our deterministic synthesis halving dimension. For these reasons, we focus on deterministic query strategies throughout the main paper and leave a systematic study of randomized membership queries for future work.

\section{Logarithm AIC Implies Exponential Learning}\label{sec:app_utd}

In this section, we show that, for any concept class $\mathcal{F}$ with finite VC dimension, $\utd(m)=O(\ln m)$ implies exponential learning. The proof follows the first step of the proof of \Autoref{thm:2d} and relies on Lemma~\ref{lemma:net}. Specifically, Lemma~\ref{lemma:net} ensures that, with high probability, a sample $S$ of size $m=O(\epsilon^{-1}\ln(1/\epsilon))$ forms an $\epsilon$-net. Since $\utd(m)=O(\ln m)$, the labeling of $S$ can be exactly identified using $O(\ln m)=O(\ln(1/\epsilon))$ membership queries. Exact identification on the $\epsilon$-net then guarantees population error at most $\epsilon$. Hence, the error decreases exponentially with the number of membership queries. 

\section{Disjoint Union of Initially-Hard-to-Halve Function Classes}\label{sec:app_union}
    Let $\mathcal{C}_n$ denote the finite concept class proposed in \Autoref{subsec:halve}, which consists of $n$ independent copies of binary trees and is hard to halve initially. 
    We can derive an infinity class $\mathcal{C} = \bigcup_{n=1}^{\infty} \mathcal{C}_n$ by taking a disjoint union of them. Nevertheless, $\atd(\mathcal{C}, m) = O(m)$. To see it, we can choose an arbitrary set $\{n_{t_1}, \dots, n_{t_m}\}$, and form an unlabeled set $S$ by selecting a single query point $q^{i}_{j}$ from each $n_{t_k}, k=1,\dots,m$. 
    As these query points are selected from $m$ disjoint query spaces, their labels are non-informative to each other. As a result, the sample complexity is $m$. 

\end{document}